\numberwithin{figure}{section}
\theoremstyle{definition}
\newtheorem{defn}{\protect\definitionname}
\theoremstyle{plain}
\newtheorem{prop}{\protect\propositionname}
\theoremstyle{plain}
\newtheorem{lem}{\protect\lemmaname}
\theoremstyle{plain}
\newtheorem{thm}{\protect\theoremname}
\DeclareFontFamily{OT1}{pzc}{}
\DeclareFontShape{OT1}{pzc}{m}{it}{<-> s * [1.200] pzcmi7t}{}
\DeclareMathAlphabet{\mathpzc}{OT1}{pzc}{m}{it}
\renewcommand\footnotemark{}
\newtheorem{assumption}{Assumption}
\newmdenv[
  hidealllines=true,
  backgroundcolor=blue!10,
  innerleftmargin=8pt,
  innerrightmargin=8pt,
  innertopmargin=0pt,
  innerbottommargin=6pt,
  leftmargin=-0pt,
  rightmargin=-0pt
]{shadedbox}
\definecolor{airforceblue}{rgb}{0.36, 0.54, 0.66}
\definecolor{ballblue}{rgb}{0.13, 0.67, 0.8}
\definecolor{alizarin}{rgb}{0.82, 0.1, 0.26}
\definecolor{asparagus}{rgb}{0.53, 0.66, 0.42}
\definecolor{applegreen}{rgb}{0.55, 0.71, 0.0}
\definecolor{armygreen}{rgb}{0.29, 0.33, 0.13}
\definecolor{amber(sae/ece)}{rgb}{1.0, 0.49, 0.0}
\definecolor{coquelicot}{rgb}{1.0, 0.22, 0.0}
\definecolor{ao(english)}{rgb}{0.0, 0.5, 0.0}
\newcommand{\CaseStretch}{1.2}
\renewcommand*\env@cases[1][\CaseStretch]{%
  \let\@ifnextchar\new@ifnextchar
  \left\lbrace
  \def\arraystretch{#1}%
  \array{@{}l@{\quad}l@{}}%
}
\providecommand{\definitionname}{Definition}
\providecommand{\lemmaname}{Lemma}
\providecommand{\propositionname}{Proposition}
\providecommand{\theoremname}{Theorem}
\begin{document}
\title{Noisy Linear Convergence of Stochastic Gradient Descent for\\
CV@R Statistical Learning under Polyak-\L ojasiewicz Conditions}
\author{Dionysios S. Kalogerias\\
Department of Electrical and Computer Engineering\\
Michigan State University}

\maketitle
\vspace{-18pt}

\begin{abstract}
Conditional Value-at-Risk \textbf{(}\textbf{\textit{$\mathrm{CV@R}$}})
is one of the most popular measures of risk, which has been recently
considered as a performance criterion in supervised statistical learning,
as it is related to desirable operational features in modern applications,
such as safety, fairness, distributional robustness, and prediction
error stability. However, due to its variational definition, \textbf{\textit{$\mathrm{CV@R}$
}}is commonly believed to result in difficult optimization problems,
even for smooth and strongly convex loss functions. We disprove this
statement by establishing noisy (i.e., fixed-accuracy) linear convergence
of stochastic gradient descent for sequential\textbf{\textit{ $\mathrm{CV@R}$}}
learning, for a large class of not necessarily strongly-convex (or
even convex) loss functions satisfying a set-restricted Polyak-\L ojasiewicz
inequality. This class contains all smooth and strongly convex losses,
confirming that classical problems, such as linear least squares regression,
can be solved efficiently under the \textbf{\textit{$\mathrm{CV@R}$
}}criterion, just as their risk-neutral versions. Our results are
illustrated numerically on such a risk-aware ridge regression task,
also verifying their validity in practice.
\end{abstract}
\textbf{\textit{$\quad$}}\textbf{Keywords.} Statistical Learning,
Risk-Aware Learning, Conditional Value-at-Risk, Stochastic Gradient
Descent, Stochastic Approximation, Polyak-\L ojasiewicz Inequality.

\section{Introduction}

Risk-awareness is becoming an increasingly important issue in modern
statistical learning theory and practice, especially due to the need
to meet strict reliability requirements in high-stakes, critical applications
\citep{Bennis2018,Ma2018,Kim2019,Cardoso2019,Koppel2019,Chaccour2020,Li2020}.
In such settings, risk-aware learning formulations are particularly
appealing, since they can \textit{explicitly balance} the performance
of optimal predictors between average-case and ``difficult'' to
learn, infrequent, or worst-case examples, inducing a form of \textit{statistical
robustness} in the learning outcome \citep{Takeda2009,W.Huang2017,Vitt2018,Cardoso2019,Zhou2020a,Soma2020,Gurbuzbalaban2020}.
The foundational idea of risk-aware statistical learning is to replace
the standard, expected loss learning objective by more general loss
functionals, called \textit{risk measures} \citep{ShapiroLectures_2ND},
whose purpose is to effectively quantify the statistical variability
of the random loss function considered, in addition to average performance.
Popular examples of risk measures include mean-variance functionals
\citep{Markowitz1952,ShapiroLectures_2ND}, mean-semideviations \citep{Kalogerias2018b},
and Conditional Value-at-Risk ($\mathrm{CV@R}$) \citep{Rockafellar1997}.

$\mathrm{CV@R}$, in particular, plays a significant role in supervised
statistical learning, as it is naturally connected not only to prediction
error stability (see Section 7), but also to distributional robustness
\citep{ShapiroLectures_2ND,Curi2019}, fairness \citep{Williamson2019},
as well as the formulation of classical learning problems, such as
the celebrated ($\nu$-)SVM \citep{Vapnik2000,Scholkopf2000,Takeda2008,Gotoh2016}.
Relevant generalization bounds were recently reported in \citep{Mhammedi2020}
and \citep{Lee2020}, establishing asymptotic consistency for $\mathrm{CV@R}$
learning, as well.

But except for operational effectiveness and generalization performance,
\textit{computational methods} for actually obtaining optimal solutions
to $\mathrm{CV@R}$ learning problems are of paramount importance,
especially for practical considerations. The design of such methods
is facilitated by the variational definition of $\mathrm{CV@R}$ (\citep{Rockafellar1997},
also see Section 2), allowing the reduction of any $\mathrm{CV@R}$
learning problem to a standard stochastic optimization problem with
a special loss function. This approach was followed in \citep{Soma2020},
where several averaged Stochastic Gradient Descent (SGD)-type algorithms
were analyzed under a batch setting (i.e., given a dataset available
\textit{a priori}). Almost concurrently, and under the same setting,
\citep{Curi2019} proposed an adaptive sampling algorithm for $\mathrm{CV@R}$
learning, by exploiting the distributionally robust representation
of $\mathrm{CV@R}$ \citep{ShapiroLectures_2ND}. In both works, convergence
rates reported are\textit{ at best} of the order of $1/{\textstyle \sqrt{T}}$,
where $T$ denotes the total runtime of the respective algorithm (iterations).

Such rates might seem to be nearly all we can get: Due to its construction,
\textbf{\textit{$\mathrm{CV@R}$ }}is commonly conjectured to result
in potentially difficult or badly behaved stochastic problems, mainly
because standard properties which enable fast convergence of gradient
methods, such as strong convexity, are \textit{not} preserved when
transitioning from (\textit{data-driven}) risk-neutral to \textbf{\textit{$\mathrm{CV@R}$}}
learning, \textit{even for} smooth and strongly convex losses.\textit{
}In this work, we disprove this argument by showing that SGD attains
\textit{noisy (i.e., fixed-tunable-accuracy) linear global convergence
}for sequential \textbf{\textit{$\mathrm{CV@R}$}} learning (i.e.,
provided a \textit{datastream}), for a large class of not necessarily
strongly-convex (or even convex) loss functions satisfying a \textit{set-restricted
Polyak-\L ojasiewicz inequality} \citep{Polyak1963,Karimi2016}. As
a byproduct of this result, we also obtain noisy linear convergence
of SGD for smooth and strongly convex losses, since those belong to
the aforementioned class. Essentially, our results confirm that at
least from an optimization perspective, \textbf{\textit{$\mathrm{CV@R}$}}
learning is almost as easy as risk-neutral learning. This implies
that \textbf{\textit{$\mathrm{CV@R}$}} learning can have widespread
use in applications, since risk-aware versions of ubiquitous problems,
such as linear least squares estimation, can be solved as efficiently
as their risk-neutral counterparts, and with provable \textit{and}
equivalent rate guarantees. Numerical simulations on such a basic
ridge regression task confirm the validity of our results in a practical
setting.

\section{\label{sec:CVAR Learning}$\boldsymbol{\mathrm{CV@R}}$ Statistical
Learning}

Let ${\cal P}_{{\cal D}}$ be an \textit{unknown} probability measure
over an \textit{example space} ${\cal D}\triangleq{\cal \mathbb{R}}^{d}\times\mathbb{R}$,
and consider a known parametric family of functions ${\cal F}\triangleq\{\phi:\mathbb{R}^{m}\rightarrow\mathbb{R}|\phi(\cdot)\equiv f(\cdot,\boldsymbol{\theta}),\boldsymbol{\theta}\in\mathbb{R}^{m}\}$,
called a \textit{hypothesis class}. We are interested in the problem
of discovering or \textit{learning} a function $f(\cdot,\boldsymbol{\theta}^{o})\in{\cal F}$
that \textit{best approximates} $y$ when presented with the input
$\boldsymbol{x}$, where the pair $(\boldsymbol{x},y)$ follows the
example distribution ${\cal P}_{{\cal D}}$. The instantaneous quality
of every admissible predictor $f(\cdot,\boldsymbol{\theta})$ is expressed
by a loss function $\ell:\mathbb{R}\times\mathbb{R}\rightarrow\mathbb{R}$
taking, for each example $(\boldsymbol{x},y)$, the quantities $f(\boldsymbol{x},\boldsymbol{\theta})$
and $y$ and mapping them to an integrable random variable, $\ell(f(\boldsymbol{x},\boldsymbol{\theta}),y)$.
Due to randomness on the example space, it is generally not possible
to minimize losses for all possible examples simultaneously. Instead,
it is standard to consider minimizing an expected loss functional
of the form
\begin{equation}
\hspace{-1bp}\inf_{\boldsymbol{\theta}\in\mathbb{R}^{m}}\hspace{-1bp}\bigg[\mathbb{E}_{{\cal P}_{{\cal D}}}\{\ell(f(\boldsymbol{x},\boldsymbol{\theta}),y)\}\hspace{-1bp}\equiv\hspace{-1bp}\int_{{\cal D}}\ell(f(\boldsymbol{x},\boldsymbol{\theta}),y)\mathrm{d}{\cal P}_{{\cal D}}(\boldsymbol{x},y)\hspace{-0.5bp}\bigg],\hspace{-1bp}\label{eq:RiskNeutral}
\end{equation}
which is at the heart of modern machine learning theory and practice
and beyond, such as signal processing, statistics, and control.

Despite its wide popularity, though, a fundamental issue with the
gold standard expected loss learning formulation is its very nature:
It is \textit{risk-neutral}, i.e., it minimizes losses \textit{only}
on average. Because of this, it lacks robustness and essentially ignores
\textit{relatively infrequent but statistically significant} example
instances, treating them as inconsequential. This is important from
a practical point of view, since such ``difficult'' or ``extreme''
examples will incur high and/or undesirable instantaneous losses,
\textit{even if} the optimal prediction error has minimal expected
value \citep{Takeda2009,ShapiroLectures_2ND,Kalogerias2018b,Koppel2019,Curi2019,Soma2020,Gurbuzbalaban2020}.

As briefly explained in Section 1, the need for a systematic treatment
of the shortcomings of the risk-neutral approach motivates and sets
the premise of \textit{risk-aware statistical learning}, in which
expectation is replaced by more general loss functionals, called risk
measures \citep{ShapiroLectures_2ND}. Their purpose is to induce
risk-averse characteristics into the learning outcome by explicitly
controlling the statistical variability of the random loss $\ell(f(\boldsymbol{x},\cdot),y)$,
or, equivalently, its tail behavior. By far one of the most popular
risk measures in theory and practice is \textbf{\textit{$\mathrm{CV@R}$}},
which for an integrable random loss $Z$ is defined as \citep{Rockafellar1997}
\begin{equation}
\mathrm{CV@R}^{\alpha}(Z)\triangleq\inf_{t\in\mathbb{R}}\Big\{ t+\dfrac{1}{\alpha}\mathbb{E}\{(Z-t)_{+}\}\hspace{-1bp}\Big\},\label{eq:CVAR}
\end{equation}
\textit{at confidence level} $\alpha\in(0,1]$. Intuitively, \textbf{\textit{$\mathrm{CV@R}^{\alpha}(Z)$
}}is the \textit{mean of the worst} $\alpha\%$ of the values of $Z$,
and\textbf{\textit{ }}is a strict generalization of expectation; in
particular, it is true that
\begin{align}
\mathrm{CV@R}^{1}(Z) & \equiv\mathbb{E}\{Z\}\le\mathrm{CV@R}^{\alpha}(Z),\forall\alpha\in(0,1],\;\text{and}\\
\mathrm{CV@R}^{0}(Z) & \triangleq\lim_{\alpha\downarrow0}\mathrm{CV@R}^{\alpha}(Z)\equiv\mathrm{ess\hspace{1bp}sup}\,Z.
\end{align}
One of the most important properties of \textbf{\textit{$\mathrm{CV@R}$
}}is that it constitutes a \textit{coherent} risk measure, meaning
that it is a \textit{convex}, \textit{monotone}, \textit{translation
equivariant} and \textit{positively homogeneous} functional of its
argument; see (\citet{ShapiroLectures_2ND}, Section 6.3).

By setting $Z\equiv\ell(f(\boldsymbol{x},\boldsymbol{\theta}),y),\boldsymbol{\theta}\in\mathbb{R}^{m}$,
we may now formulate the \textbf{\textit{$\mathrm{CV@R}$}} statistical
learning problem as
\begin{equation}
\boxed{\inf_{\boldsymbol{\theta}\in\mathbb{R}^{m}}\mathrm{CV@R}_{{\cal P}_{{\cal D}}}^{\alpha}[\ell(f(\boldsymbol{x},\boldsymbol{\theta}),y)].}\label{eq:CVAR_Original}
\end{equation}
Observe that due to its defining properties, the \textbf{\textit{$\mathrm{CV@R}$
}}problem is most intuitive, and allows for an excellent \textit{tunable}
tradeoff between risk neutrality (for $\alpha\equiv1$), and minimax
robustness (as $\alpha\downarrow0$). Additionally, because $\mathrm{CV@R}$
is a coherent risk measure, it follows that problem (\ref{eq:CVAR_Original})
is convex whenever $\ell(f(\boldsymbol{x},\cdot),y)$ is convex for
each $(\boldsymbol{x},y)$, and strongly convex whenever $\ell(f(\boldsymbol{x},\cdot),y)$
is strongly convex for each $(\boldsymbol{x},y)$ \citep{Kalogerias2018b}.
Thus, problem (\ref{eq:CVAR_Original}) is favorably structured.

However, because \textbf{\textit{$\mathrm{CV@R}$}} is itself defined
as the optimal value of a stochastic program, it is difficult to evaluate
analytically, especially in a data-driven setting. Still, we may leverage
the definition of \textbf{\textit{$\mathrm{CV@R}$}} and reformulate
(\ref{eq:CVAR_Original}) as a risk-neutral stochastic program over
\textit{both} variables $(\boldsymbol{\theta},t)$ as
\begin{equation}
\boxed{\inf_{(\boldsymbol{\theta},t)\in\mathbb{R}^{m}\times\mathbb{R}}\mathbb{E}_{{\cal P}_{{\cal D}}}\Big\{ t+\dfrac{1}{\alpha}(\ell(f(\boldsymbol{x},\boldsymbol{\theta}),y)-t)_{+}\Big\}.}\label{eq:CVAR_min}
\end{equation}
Although problem (\ref{eq:CVAR_min}) can now be tackled using standard
methods of stochastic optimization, the structural benefits of the
$\mathrm{CV@R}$ functional are largely gone: For instance, although
it is true that (\ref{eq:CVAR_min}) is convex whenever the composition
$\ell(f(\boldsymbol{x},\cdot),y)$ is convex, it \textit{might} \textit{not}
be strongly convex, even if $\ell(f(\boldsymbol{x},\cdot),y)$ is.
This is important, because it would imply that classical setups, such
as linear least squares, might result in badly behaving $\mathrm{CV@R}$
problems, for $\alpha\in(0,1)$. Of course, those issues can only
get worse in the nonconvex setting, e.g., when the function $f$ is
a Deep Neural Network (DNN).

Nevertheless, it is intuitive that, due to the close relationship
between problems (\ref{eq:CVAR_Original}) and (\ref{eq:CVAR_min}),
the good behavior of the former should carry through to the latter,
and classical solution strategies, such as SGD, should exhibit good
performance. This work shows that this is indeed the case, even in
the nonconvex regime.\vspace{-7bp}

\section{$\boldsymbol{\mathrm{CV@R}}$ Stochastic Gradient Descent\vspace{-6bp}
}

Since the distribution ${\cal P}_{{\cal D}}$ is unknown, the stochastic
program (\ref{eq:RiskNeutral}) (cf. (\ref{eq:CVAR_min})) is impossible
to solve \textit{a priori}. Instead, one should rely on \textit{observable}
example pairs; such empirical data are the only available information
primitives, based on which a near-optimal $f(\cdot,\boldsymbol{\theta}^{*})$
might become possible to discover. Regarding the availability of such
data, there are two distinct settings, the \textit{batch} and and
the \textit{sequential}. The first assumes the availability of a finite
dataset $\{(\boldsymbol{x}^{n},y^{n})\}_{n=0}^{N}$, and replaces
${\cal P}_{{\cal D}}$ in (\ref{eq:RiskNeutral}) (cf. (\ref{eq:CVAR_min}))
with the empirical measure induced by the dataset; in the literature,
this is usually referred to as Empirical ``Risk'' Minimization (ERM)
\citep{Vapnik2000}, and Sample Average Approximation (SAA) \citep{ShapiroLectures_2ND}.
In the second setting, a possibly infinite in length \textit{stream
of data} $\{(\boldsymbol{x}^{n},y^{n})\}_{n=0}^{\infty}$ is available
sequentially (or in sequential batches), and the focus is on solving
(\ref{eq:RiskNeutral}) (cf. (\ref{eq:CVAR_min})) directly, primarily
via stochastic approximation \citep{Kushner2003}. Note that, at least
from the perspective of stochastic optimization, the sequential setting
contains the batch setting as a special, nonetheless important case.

In this paper we are assuming the sequential data setting. This conforms
with countless real-time applications, and is also the standard problem
setup in stochastic optimization. Specifically, we study the standard
stochastic gradient descent algorithm, applied to the equivalent $\mathrm{CV@R}$
problem (\ref{eq:CVAR_min}). Throughout, we make the following essential
but mild assumptions on the composition $\ell(f(\boldsymbol{x},\cdot),y)$.

\begin{assumption}\label{Assumption1} Unless the function $\ell(f(\boldsymbol{x},\cdot),y)$
is convex on $\mathbb{R}^{m}$ for ${\cal P}_{{\cal D}}$-almost all
$(\boldsymbol{x},y)$, then for each $\boldsymbol{\theta}\in\mathbb{R}^{m}$:
\begin{enumerate}
\item $\ell(f(\boldsymbol{x},\cdot),y)$ is $C_{\boldsymbol{\theta}}(\boldsymbol{x},y)$-Lipschitz
on a neighborhood $\boldsymbol{\theta}$ for ${\cal P}_{{\cal D}}$-almost
all $(\boldsymbol{x},y)$, and it is true that $\mathbb{E}_{{\cal P}_{{\cal D}}}\{C_{\boldsymbol{\theta}}(\boldsymbol{x},y)\}<\infty$.
\item \label{Ass12}$\ell(f(\boldsymbol{x},\cdot),y)$ is differentiable
at $\boldsymbol{\theta}$ for ${\cal P}_{{\cal D}}$-almost all $(\boldsymbol{x},y)$,
and ${\cal P}_{{\cal D}}(\ell(f(\boldsymbol{x},\boldsymbol{\theta}),y)=t)\equiv0$
for all $(\boldsymbol{\theta},t)\in\mathbb{R}^{m}\times\mathbb{R}$.
\end{enumerate}
\end{assumption}

For convenience, let us define, for $(\boldsymbol{\theta},t)\in\mathbb{R}^{m}\times\mathbb{R}$,
\begin{equation}
G_{\alpha}(\boldsymbol{\theta},t)\triangleq\mathbb{E}_{{\cal P}_{{\cal D}}}\Big\{ t+\dfrac{1}{\alpha}(\ell(f(\boldsymbol{x},\boldsymbol{\theta}),y)-t)_{+}\Big\}.
\end{equation}
Then it may be shown that, under Assumption \ref{Assumption1}, differentiation
may be interchanged with expectation for $G_{\alpha}$ (\citep{ShapiroLectures_2ND},
Section 7.2.4), yielding, for every $(\boldsymbol{\theta},t)$, the
(sub)gradient representation\renewcommand{\arraystretch}{2}
\begin{equation}
\hspace{-1bp}\hspace{-1bp}\hspace{-1bp}\hspace{-1bp}\hspace{-1bp}\hspace{-1bp}\hspace{-1bp}\hspace{-0.5bp}\nabla G_{\alpha}(\boldsymbol{\theta},t)\hspace{-1bp}\hspace{-0.5bp}=\hspace{-1bp}\hspace{-1bp}\hspace{-1bp}\hspace{-1bp}\begin{bmatrix}\dfrac{1}{\alpha}\mathbb{E}_{{\cal P}_{{\cal D}}}\{{\bf 1}_{{\cal A}(\boldsymbol{\theta},t)}(\boldsymbol{x},y)\nabla_{\boldsymbol{\theta}}\ell(f(\boldsymbol{x},\boldsymbol{\theta}),y)\}\\
-\dfrac{1}{\alpha}\mathbb{E}_{{\cal P}_{{\cal D}}}\{{\bf 1}_{{\cal A}(\boldsymbol{\theta},t)}(\boldsymbol{x},y)\}+1
\end{bmatrix}\hspace{-1bp}\hspace{-1bp}\hspace{-0.5bp},\hspace{-1bp}\hspace{-1bp}\hspace{-1bp}\hspace{-1bp}\hspace{-1bp}\label{eq:GRAD}
\end{equation}
where for brevity and for later use we have defined the \textit{event-valued}
multifunction ${\cal A}:\mathbb{R}^{m}\times\mathbb{R}\rightrightarrows{\cal D}$
as
\begin{equation}
{\cal A}(\boldsymbol{\theta},t)\triangleq\{(\boldsymbol{x},y)\in{\cal D}|\ell(f(\boldsymbol{x},\boldsymbol{\theta}),y)-t>0\},
\end{equation}
for $(\boldsymbol{\theta},t)\in\mathbb{R}^{m}\times\mathbb{R}$. We
note that, for each $(\boldsymbol{\theta},t)$, the set ${\cal A}(\boldsymbol{\theta},t)$
contains all examples corresponding to the \textit{positive section}
of the function $\ell(f(\bullet,\boldsymbol{\theta}),\cdot)-t$.

Leveraging (\ref{eq:GRAD}), and given an independent and identically
distributed datastream $\{(\boldsymbol{x}^{n},y^{n})\}_{n=0}^{\infty}$,
we can now outline the simplest and most obvious scheme for possibly
tackling the $\mathrm{CV@R}$ problem (\ref{eq:CVAR_min}), i.e.,
the standard SGD rule, described via the recursive updates
\begin{flalign}
t^{n+1} & =t^{n}-\gamma\Big[1-\dfrac{1}{\alpha}{\bf 1}_{{\cal A}(\boldsymbol{\theta}^{n},t^{n})}(\boldsymbol{x}^{n+1},y^{n+1})\Big]\quad\text{and}\label{eq:SGD_1}\\
\boldsymbol{\theta}^{n+1} & =\boldsymbol{\theta}^{n}-\beta\dfrac{1}{\alpha}{\bf 1}_{{\cal A}(\boldsymbol{\theta}^{n},t^{n})}(\boldsymbol{x}^{n+1},y^{n+1})\nabla_{\boldsymbol{\theta}}\ell(f(\boldsymbol{x}^{n+1},\boldsymbol{\theta}^{n}),y^{n+1}),\label{eq:SGD_2}
\end{flalign}
where $n\in\mathbb{N}$ is an iteration index, $\beta>0$ and $\gamma>0$
are constant stepsizes, and where $(\boldsymbol{\theta}^{0},t^{0})$
are appropriately chosen initial values.

We observe that the SGD updates (\ref{eq:SGD_1}) and (\ref{eq:SGD_2})
can be regarded as a modification of the standard risk-neutral SGD
(solving (\ref{eq:RiskNeutral})), but where learning happens \textit{if
and only if }$\ell(f(\boldsymbol{x}^{n+1},\boldsymbol{\theta}^{n}),y^{n+1})-t^{n}\ge0$,
for each $n$. The update in $t$ controls the frequency of learning,
as well as the proportion of examples that participate in learning.
Also note that if $\alpha\equiv1$, then $t^{n}$ is nonincreasing,
and therefore $\boldsymbol{\theta}^{n}$ should approach a risk-neutral
solution. In the following, we suggestively refer to the algorithm
comprised by (\ref{eq:SGD_1}) and (\ref{eq:SGD_2}) as $\mathrm{CV@R}$-SGD.\vspace{-7bp}

\section{\label{sec:PL}Polyak-\L ojasiewicz Conditions\vspace{-6bp}
}

We next present the standard Polyak-\L ojasiewicz (P\L ) inequality,
first appeared in \citep{Polyak1963}.
\begin{defn}
\textbf{(P\L{} \citet{Polyak1963})}\label{def:PL} We say that a
function $\varphi:\mathbb{R}^{L}\rightarrow\mathbb{R}$ satisfies
the \textit{Polyak-\L ojasiewicz (P\L ) inequality} \textit{with parameter}
$\mu>0$ on $\Sigma\subseteq\mathbb{R}^{L}$, if and only if $\varphi$
is differentiable on $\Sigma$ and, for every $\boldsymbol{x}\in\Sigma$,
\begin{equation}
\dfrac{1}{2}\Vert\nabla\varphi(\boldsymbol{x})\Vert_{2}^{2}\ge\mu(\varphi(\boldsymbol{x})-\varphi^{\star}),
\end{equation}
where $\varphi^{\star}\triangleq\inf_{\boldsymbol{x}\in\Sigma}\varphi(\boldsymbol{x})$.
\end{defn}
In a recent seminal article \citep{Karimi2016}, the P\L{} inequality
was exploited to show linear convergence of gradient methods under
multiple interesting and useful setups. Further, \citep{Karimi2016}
shows that strong convexity implies the P\L{} inequality, but also
that there are lots of \textit{nonconvex} functions obeying the P\L{}
inequality. This indeed implies that S(GD) converges \textit{globally
and linearly }for such functions.

For our purposes, unfortunately, the standard P\L{} inequality (Definition
\ref{def:PL}) will not suffice. Instead, we introduce and rely on
a generalization, which we call the \textit{set-restricted P\L{} inequality},
as follows.
\begin{defn}
\textbf{(Set-Restricted P\L )}\label{def:Set-Restricted-PL-1} Consider
a measurable function $\varphi:\mathbb{R}^{L}\times\mathbb{R}^{M}\rightarrow\mathbb{R}$,
a Borel-valued multifunction ${\cal B}:\mathbb{R}^{L}\rightrightarrows\mathbb{R}^{M}$,
and a probability measure ${\cal M}$ on $\mathscr{B}(\mathbb{R}^{M})$.
We say that $\varphi$ satisfies the (diagonal) ${\cal B}$\textit{-restricted
Polyak-\L ojasiewicz (P\L ) inequality} \textit{with parameter} $\mu>0$,
relative to ${\cal M}$ and on a subset $\Sigma\subseteq\mathbb{R}^{L}$,
if and only if $\varphi(\cdot,\boldsymbol{w})$ is subdifferentiable
on $\Sigma$ for ${\cal M}$-almost every $\boldsymbol{w}\in\mathbb{R}^{M}$,
and it is true that, for every $\boldsymbol{z}\in\Sigma$,
\begin{equation}
\dfrac{1}{2}\Vert\mathbb{E}_{{\cal M}}\{\nabla_{\boldsymbol{z}}\varphi(\boldsymbol{z},\boldsymbol{w})|{\cal B}(\boldsymbol{z})\}\Vert_{2}^{2}\ge\mu\mathbb{E}_{{\cal M}}\{\varphi(\boldsymbol{z},\boldsymbol{w})-\varphi^{\star}(\boldsymbol{z})|{\cal B}(\boldsymbol{z})\},
\end{equation}
where $\varphi^{\star}(\cdot)\triangleq\inf_{\widetilde{\boldsymbol{z}}\in\Sigma}\mathbb{E}_{{\cal M}}\{\varphi(\widetilde{\boldsymbol{z}},\boldsymbol{w})|{\cal B}(\cdot)\}$.
\end{defn}
Although admittedly somewhat mysterious at first sight, the set-restricted
P\L{} inequality is essentially the same as the classical P\L{} inequality
as considered for standard stochastic optimization \citep{Karimi2016},
with the important difference that expectation is replaced by conditional
expectation relative to an event \textit{varying} in the argument
of the function involved (i.e., an event-valued multifunction). From
a learning perspective, the set-restricted P\L{} inequality quantifies
the curvature of the loss surface by restricting attention on sets
of learning examples that matter (in Definition \ref{def:Set-Restricted-PL-1},
${\cal B}$ plays this role).

One fact revealing the importance of the set-restricted P\L{} inequality
of Definition \ref{def:Set-Restricted-PL-1} is that it is satisfied
by all smooth and strongly convex losses. In particular, we have the
following result.
\begin{prop}
\textbf{\textup{(Strong Convexity $\implies$ Set-Restricted P\L )}}\label{prop:SC_sat_scPL}
Suppose that the loss $\ell(f(\boldsymbol{x},\cdot),y)$ is $L$-smooth
and $\mu$-strongly convex for ${\cal P}_{{\cal D}}$-almost all
$(\boldsymbol{x},y)$. Then, for every pair $(\boldsymbol{\theta},{\cal B})\in\mathbb{R}^{m}\times\mathscr{B}({\cal D})$
such that ${\cal P}_{{\cal D}}({\cal B})>0$, it is true that
\begin{equation}
\dfrac{1}{2}\Vert\mathbb{E}\{\nabla_{\boldsymbol{\theta}}\ell(f(\boldsymbol{x},\boldsymbol{\theta}),y)|{\cal B})\}\Vert_{2}^{2}\ge\mu\mathbb{E}\{\ell(f(\boldsymbol{x},\boldsymbol{\theta}),y)-\ell^{\star}({\cal B})|{\cal B}\},
\end{equation}
where $\ell^{\star}({\cal B})\equiv\inf_{\widetilde{\boldsymbol{\theta}}}\mathbb{E}\{\ell(f(\boldsymbol{x},\widetilde{\boldsymbol{\theta}}),y)|{\cal B}\}$.
\end{prop}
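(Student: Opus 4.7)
The plan is to reduce the set-restricted P\L{} inequality for the loss $\ell(f(\boldsymbol{x},\boldsymbol{\theta}),y)$ to the classical (deterministic) P\L{} inequality for the conditional expected loss, by exploiting the well-known fact that strong convexity implies P\L{} with the same parameter (\citep{Karimi2016}, Appendix A).

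Concretely, fix $(\boldsymbol{\theta},{\cal B})$ with ${\cal P}_{{\cal D}}({\cal B})>0$ and define the auxiliary function
\begin{equation}
L_{{\cal B}}(\boldsymbol{\theta})\triangleq\mathbb{E}\{\ell(f(\boldsymbol{x},\boldsymbol{\theta}),y)\mid{\cal B}\}.
\end{equation}
The first step is to verify that $L_{{\cal B}}$ inherits $\mu$-strong convexity from the pointwise hypothesis: since strong convexity is preserved under positive linear combinations and, more generally, under (conditional) expectation against a probability measure, and since $\ell(f(\boldsymbol{x},\cdot),y)$ is $\mu$-strongly convex ${\cal P}_{{\cal D}}$-a.s., the function $L_{{\cal B}}:\mathbb{R}^{m}\to\mathbb{R}$ is $\mu$-strongly convex (and, analogously, $L$-smooth). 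In particular, it admits a unique global minimizer, so $\ell^{\star}({\cal B})=\inf_{\widetilde{\boldsymbol{\theta}}}L_{{\cal B}}(\widetilde{\boldsymbol{\theta}})$ is attained.

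The second step is to justify the interchange of differentiation and conditional expectation, i.e.,
\begin{equation}
\nabla L_{{\cal B}}(\boldsymbol{\theta})=\mathbb{E}\{\nabla_{\boldsymbol{\theta}}\ell(f(\boldsymbol{x},\boldsymbol{\theta}),y)\mid{\cal B}\}.
\end{equation}
This follows from Assumption \ref{Assumption1} together with $L$-smoothness: pointwise differentiability and the uniform integrability supplied by the Lipschitz envelope $C_{\boldsymbol{\theta}}(\boldsymbol{x},y)$ allow application of the dominated convergence theorem to the difference quotient, exactly as in the unconditional case (\citep{ShapiroLectures_2ND}, Section 7.2.4), with the conditioning on ${\cal B}$ simply amounting to integration against the normalized restriction of ${\cal P}_{{\cal D}}$ to ${\cal B}$.

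The final step is to invoke the classical implication ``$\mu$-strong convexity $\Rightarrow$ P\L{} with parameter $\mu$'' applied to the deterministic function $L_{{\cal B}}$: for every $\boldsymbol{\theta}\in\mathbb{R}^{m}$,
\begin{equation}
\tfrac{1}{2}\Vert\nabla L_{{\cal B}}(\boldsymbol{\theta})\Vert_{2}^{2}\ge\mu\bigl(L_{{\cal B}}(\boldsymbol{\theta})-\ell^{\star}({\cal B})\bigr).
\end{equation}
Substituting the identities of the previous two steps into this inequality yields exactly the stated set-restricted P\L{} bound. I do not anticipate a genuine obstacle here; the only delicate points are (i) confirming that the gradient/expectation interchange remains valid once the expectation is replaced by a conditional expectation on a positive-measure event, which is immediate after renormalization, and (ii) noting that preservation of strong convexity under conditional expectation is a direct consequence of the convex combination characterization of strong convexity, applied ${\cal P}_{{\cal D}}(\cdot\mid{\cal B})$-almost surely.
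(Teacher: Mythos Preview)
Your proposal is correct and follows essentially the same route as the paper: define the conditional expected loss $L_{{\cal B}}(\boldsymbol{\theta})\triangleq\mathbb{E}\{\ell(f(\boldsymbol{x},\boldsymbol{\theta}),y)\mid{\cal B}\}$, transfer $\mu$-strong convexity and $L$-smoothness to $L_{{\cal B}}$ by taking conditional expectations, justify the interchange $\nabla L_{{\cal B}}=\mathbb{E}\{\nabla_{\boldsymbol{\theta}}\ell\mid{\cal B}\}$ via Assumption~\ref{Assumption1}, and then invoke the classical ``strong convexity $\Rightarrow$ P\L'' implication from \citep{Karimi2016}. The paper's proof is slightly terser but structurally identical.
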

\begin{proof}[Proof of Proposition \ref{prop:SC_sat_scPL}]
Taking conditional (rescaled) expectations relative to ${\cal B}$,
we get that, for every qualifying pair $(\boldsymbol{\theta},\boldsymbol{\theta}')$,
\begin{equation}
\mathbb{E}\{\ell(f(\boldsymbol{x},\boldsymbol{\theta}),y)|{\cal B}\}\ge\mathbb{E}\{\ell(f(\boldsymbol{x},\boldsymbol{\theta}'),y)|{\cal B}\}+\langle\mathbb{E}\{\nabla_{\boldsymbol{\theta}}\ell(f(\boldsymbol{x},\boldsymbol{\theta}'),y)|{\cal B}\},\boldsymbol{\theta}-\boldsymbol{\theta}'\rangle+\dfrac{\mu}{2}\Vert\boldsymbol{\theta}-\boldsymbol{\theta}'\Vert_{2}^{2}.
\end{equation}
By Assumption \ref{Assumption1}, we may interchange expectation with
differentiation, further obtaining
\begin{equation}
L_{{\cal B}}(\boldsymbol{\theta})\ge L_{{\cal B}}(\boldsymbol{\theta}')+\langle\nabla L_{{\cal B}}(\boldsymbol{\theta}),\boldsymbol{\theta}-\boldsymbol{\theta}'\rangle+\dfrac{\mu}{2}\Vert\boldsymbol{\theta}-\boldsymbol{\theta}'\Vert_{2}^{2},\quad\forall(\boldsymbol{\theta},\boldsymbol{\theta}'),
\end{equation}
where $L_{{\cal B}}(\cdot)\triangleq\mathbb{E}\{\ell(f(\boldsymbol{x},\cdot),y)|{\cal B}\}$.
This shows that the restricted expected loss $L_{{\cal B}}$ is $\mu$-strongly
convex. In exactly the same fashion, it follows that $L_{{\cal B}}$
is $L$-smooth, as well. Consequently, $L_{{\cal B}}$ satisfies the
P\L{} inequality with parameter $\mu$ \citep{Karimi2016}, i.e.,
it is true that, for every qualifying $\boldsymbol{\theta}$,
\begin{equation}
\dfrac{1}{2}\Vert\nabla L_{{\cal B}}(\boldsymbol{\theta})\Vert_{2}^{2}\ge\mu(L_{{\cal B}}(\boldsymbol{\theta})-{\textstyle \inf_{\boldsymbol{\theta}}L_{{\cal B}}(\boldsymbol{\theta})}).
\end{equation}
But $\nabla L_{{\cal B}}(\cdot)\equiv\mathbb{E}\{\nabla_{\boldsymbol{\theta}}\ell(f(\boldsymbol{x},\cdot),y)|{\cal B}\}$.
Enough said.
\end{proof}
From Proposition \ref{prop:SC_sat_scPL}, it follows that every smooth
strongly convex loss satisfies the set-restricted P\L{} inequality
relative to any qualifying event-valued multifunction of choice. For
instance, in the notation of Proposition \ref{prop:SC_sat_scPL},
one may set ${\cal B}\equiv{\cal A}(\boldsymbol{\theta},t)$, for
every fixed pair $(\boldsymbol{\theta},t)$. This choice is particularly
important, as we will see in the next section.

\section{Linear Convergence of $\boldsymbol{\mathrm{CV@R}}$-SGD}

In this section, we present the main results of the paper. We start
by showing that, quite interestingly, if the loss satisfies the set-restricted
P\L{} inequality relative to the multifunction ${\cal A}$, then the
objective function $G_{\alpha}$ satisfies the ordinary P\L{} inequality.
The relevant result follows.
\begin{lem}
\textbf{\textup{($G$ is Polyak-\L ojasiewicz)}}\label{lem:CV@R_Approx}
Fix an $\alpha\in(0,1)$ and consider a set $\Delta\triangleq\Delta_{m}\times(-\infty,\overline{t}]\subseteq\mathbb{R}^{m}\times\mathbb{R}$,
for which the following are in effect:
\begin{enumerate}
\item ${\textstyle \arg\min}_{\Delta}G_{\alpha}(\boldsymbol{\theta},t)\neq\emptyset$,
with $(\boldsymbol{\theta}^{*},t^{*})$ being an arbitrary member
of this set.
\item the random loss $\ell(f(\boldsymbol{x},\cdot),y)$ satisfies the ${\cal A}$-restricted
P\L{} inequality with parameter $\mu>0$, relative to ${\cal D}$
and on $\Delta$, i.e.,
\begin{equation}
\dfrac{1}{2}\Vert\mathbb{E}\{\nabla_{\boldsymbol{\theta}}\ell(f(\boldsymbol{x},\boldsymbol{\theta}),y)|{\cal A}(\boldsymbol{\theta},t)\}\Vert_{2}^{2}\ge\mu\mathbb{E}\{\ell(f(\boldsymbol{x},\boldsymbol{\theta}),y)-\ell^{\star}(\boldsymbol{\theta},t)|{\cal A}(\boldsymbol{\theta},t)\},
\end{equation}
for all $(\boldsymbol{\theta},t)\in\Delta$, where $\ell^{\star}(\bullet,\cdot)\equiv\inf_{\widetilde{\boldsymbol{\theta}}\in\Delta_{m}}\mathbb{E}\{\ell(f(\boldsymbol{x},\widetilde{\boldsymbol{\theta}}),y)|{\cal A}(\bullet,\cdot)\}$.
\end{enumerate}
Then, for any subset $\Delta'\subseteq\Delta$ such that
\begin{equation}
{\cal P}_{{\cal D}}({\cal A}(\boldsymbol{\theta},t))>\alpha+2\alpha\mu(t^{*}-t)_{+},\quad\forall(\boldsymbol{\theta},t)\in\Delta',
\end{equation}
the $\mathrm{CV@R}$ objective $G_{\alpha}$ obeys
\begin{equation}
\mu(G_{\alpha}(\boldsymbol{\theta},t)-G_{\alpha}(\boldsymbol{\theta}^{*},t^{*}))\le\dfrac{1}{2}\Vert\nabla G(\boldsymbol{\theta},t)\Vert_{2}^{2},
\end{equation}
 everywhere on $\Delta'$.
\end{lem}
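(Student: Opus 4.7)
The plan is to upper-bound $G_{\alpha}(\boldsymbol{\theta},t)-G_{\alpha}(\boldsymbol{\theta}^{*},t^{*})$ by the sum of (i) a conditional expected loss gap, which the ${\cal A}$-restricted P\L{} hypothesis converts into $\tfrac{1}{2\mu}\Vert\mathbb{E}\{\nabla_{\boldsymbol{\theta}}\ell(f(\boldsymbol{x},\boldsymbol{\theta}),y)|{\cal A}(\boldsymbol{\theta},t)\}\Vert_{2}^{2}$, and (ii) a linear-in-$t$ residual, which the $\Delta'$-assumption $\mathcal{P}_{\mathcal{D}}({\cal A}(\boldsymbol{\theta},t))>\alpha+2\alpha\mu(t^{*}-t)_{+}$ is calibrated to absorb into the squared $t$-slope $(1-p/\alpha)^{2}$ of $G_{\alpha}$ seen in (\ref{eq:GRAD}). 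The engine is a single pointwise subgradient inequality for $u\mapsto u_{+}$.

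First I would establish, for all $a,a',t,t'\in\mathbb{R}$, the pointwise bound
\[
(a-t)_{+}-(a'-t')_{+}\le\mathbf{1}_{\{a>t\}}\bigl[(a-a')+(t'-t)\bigr],
\]
valid because $\mathbf{1}_{\{a>t\}}\in\partial(\cdot)_{+}(a-t)$ (a four-case split on the signs of $a-t$ and $a'-t'$ verifies it directly). Substituting $a=\ell(f(\boldsymbol{x},\boldsymbol{\theta}),y)$ and $a'=\ell(f(\boldsymbol{x},\boldsymbol{\theta}'),y)$, taking $\mathcal{P}_{\mathcal{D}}$-expectation, scaling by $1/\alpha$, and adding $t-t'$ yields, with $p:=\mathcal{P}_{\mathcal{D}}({\cal A}(\boldsymbol{\theta},t))$,
\[
G_{\alpha}(\boldsymbol{\theta},t)-G_{\alpha}(\boldsymbol{\theta}',t')\le(t-t')\bigl(1-\tfrac{p}{\alpha}\bigr)+\tfrac{p}{\alpha}\bigl[\mathbb{E}\{\ell(f(\boldsymbol{x},\boldsymbol{\theta}),y)|{\cal A}(\boldsymbol{\theta},t)\}-\mathbb{E}\{\ell(f(\boldsymbol{x},\boldsymbol{\theta}'),y)|{\cal A}(\boldsymbol{\theta},t)\}\bigr].
\]

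Next I would specialize to $(\boldsymbol{\theta}',t')=(\boldsymbol{\theta}^{*},t^{*})$, invoke $\boldsymbol{\theta}^{*}\in\Delta_{m}$ to write $\mathbb{E}\{\ell(f(\boldsymbol{x},\boldsymbol{\theta}^{*}),y)|{\cal A}(\boldsymbol{\theta},t)\}\ge\ell^{\star}(\boldsymbol{\theta},t)$, and then apply the ${\cal A}$-restricted P\L{} hypothesis to replace $\mathbb{E}\{\ell|{\cal A}\}-\ell^{\star}$ by $\tfrac{1}{2\mu}\Vert\mathbb{E}\{\nabla_{\boldsymbol{\theta}}\ell|{\cal A}\}\Vert_{2}^{2}$, obtaining
\[
G_{\alpha}(\boldsymbol{\theta},t)-G_{\alpha}(\boldsymbol{\theta}^{*},t^{*})\le(t-t^{*})\bigl(1-\tfrac{p}{\alpha}\bigr)+\tfrac{p}{2\alpha\mu}\Vert\mathbb{E}\{\nabla_{\boldsymbol{\theta}}\ell(f(\boldsymbol{x},\boldsymbol{\theta}),y)|{\cal A}(\boldsymbol{\theta},t)\}\Vert_{2}^{2}.
\]
Reading off from (\ref{eq:GRAD}) that $\tfrac{1}{2\mu}\Vert\nabla G_{\alpha}(\boldsymbol{\theta},t)\Vert_{2}^{2}=\tfrac{p^{2}}{2\alpha^{2}\mu}\Vert\mathbb{E}\{\nabla_{\boldsymbol{\theta}}\ell|{\cal A}\}\Vert_{2}^{2}+\tfrac{1}{2\mu}(1-p/\alpha)^{2}$, and writing $\eta:=p/\alpha-1$ (strictly positive under the $\Delta'$-hypothesis), the gap between the two bounds rearranges to
\[
\tfrac{p(p-\alpha)}{2\alpha^{2}\mu}\Vert\mathbb{E}\{\nabla_{\boldsymbol{\theta}}\ell|{\cal A}\}\Vert_{2}^{2}+\tfrac{\eta^{2}}{2\mu}-(t^{*}-t)\eta.
\]
The first summand is non-negative since $p>\alpha$; the remaining $\eta^{2}/(2\mu)-(t^{*}-t)\eta$ is trivially non-negative for $t^{*}\le t$, and for $t^{*}>t$ reduces to the scalar condition $\eta\ge2\mu(t^{*}-t)_{+}$, which is exactly what $p>\alpha+2\alpha\mu(t^{*}-t)_{+}$ furnishes (strictly). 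Chaining the three bounds delivers the claimed P\L{} inequality for $G_{\alpha}$.

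The main subtlety is the choice of subgradient in the first step. The natural alternative $\mathbf{1}_{\{a'>t'\}}\in\partial(\cdot)_{+}(a'-t')$ would condition every conditional expectation on ${\cal A}(\boldsymbol{\theta}^{*},t^{*})$, i.e., on a \emph{reference-point} event, whereas the set-restricted P\L{} hypothesis controls only the \emph{current-point} event ${\cal A}(\boldsymbol{\theta},t)$, which is precisely what enters $\nabla_{\boldsymbol{\theta}}G_{\alpha}(\boldsymbol{\theta},t)$ via (\ref{eq:GRAD}). Evaluating the subgradient at the current point is what makes the leftover linear-in-$t$ term come out with the right sign structure, and the $\Delta'$-condition on $p$ is exactly the threshold needed to offset the worst case $t<t^{*}$.
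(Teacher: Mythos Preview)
Your proposal is correct and follows essentially the same route as the paper: the same subgradient inequality $(a-t)_{+}-(a'-t')_{+}\le\mathbf{1}_{\{a>t\}}[(a-a')+(t'-t)]$ evaluated at the \emph{current} point, the same passage to conditional expectations on ${\cal A}(\boldsymbol{\theta},t)$, the same use of $\boldsymbol{\theta}^{*}\in\Delta_{m}$ to drop to $\ell^{\star}$, and the same application of the set-restricted P\L{} hypothesis. The only cosmetic difference is that the paper bounds the two terms separately (using $p>\alpha$ to inflate $p/(2\mu\alpha)$ to $p^{2}/(2\mu\alpha^{2})$, and the $\Delta'$-condition to bound $(t^{*}-t)(p/\alpha-1)$ by $\tfrac{1}{2\mu}(p/\alpha-1)^{2}$), whereas you compute the full gap $\tfrac{1}{2\mu}\Vert\nabla G_{\alpha}\Vert_{2}^{2}-[\text{upper bound}]$ and check its sign; these are equivalent rearrangements of the same inequality.
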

\begin{proof}[Proof of Lemma \ref{lem:CV@R_Approx}]
For every $(\boldsymbol{x},y)$, we have
\begin{flalign}
 & \hspace{-1bp}\hspace{-1bp}\hspace{-1bp}\hspace{-1bp}\hspace{-1bp}\hspace{-1bp}\hspace{-1bp}\hspace{-1bp}\hspace{-1bp}g_{(\boldsymbol{x},y)}^{\alpha}(\boldsymbol{\theta},t)-g_{(\boldsymbol{x},y)}^{\alpha}(\boldsymbol{\theta}^{*},t^{*})\nonumber \\
 & \equiv t-t^{*}+\dfrac{1}{\alpha}(\ell(f(\boldsymbol{x},\boldsymbol{\theta}),y)-t)_{+}-\dfrac{1}{\alpha}(\ell(f(\boldsymbol{x},\boldsymbol{\theta}^{*}),y)-t^{*})_{+}\nonumber \\
 & \le t-t^{*}+\dfrac{1}{\alpha}(\ell(f(\boldsymbol{x},\boldsymbol{\theta}),y)-t)_{+}-\dfrac{1}{\alpha}{\bf 1}_{{\cal A}(\boldsymbol{\theta},t)}(\boldsymbol{x},y)(\ell(f(\boldsymbol{x},\boldsymbol{\theta}^{*}),y)-t^{*})\nonumber \\
 & \equiv t-t^{*}+\dfrac{1}{\alpha}{\bf 1}_{{\cal A}(\boldsymbol{\theta},t)}(\boldsymbol{x},y)\big[\ell(f(\boldsymbol{x},\boldsymbol{\theta}),y)-\ell(f(\boldsymbol{x},\boldsymbol{\theta}^{*}),y)+t^{*}-t\big]\nonumber \\
 & =(t^{*}-t)\Big(\dfrac{1}{\alpha}{\bf 1}_{{\cal A}(\boldsymbol{\theta},t)}(\boldsymbol{x},y)-1\Big)+\dfrac{1}{\alpha}{\bf 1}_{{\cal A}(\boldsymbol{\theta},t)}(\boldsymbol{x},y)\big[\ell(f(\boldsymbol{x},\boldsymbol{\theta}),y)-\ell(f(\boldsymbol{x},\boldsymbol{\theta}^{*}),y)\big].
\end{flalign}
By taking expectation on both sides, it follows that
\begin{flalign}
 & \hspace{0.5bp}G_{\alpha}(\boldsymbol{\theta},t)-G_{\alpha}(\boldsymbol{\theta}^{*},t^{*})\nonumber \\
 & \le\hspace{-1bp}\hspace{-1bp}(t^{*}\hspace{-1bp}-\hspace{-1bp}t)\Big(\dfrac{1}{\alpha}{\cal P}_{{\cal D}}({\cal A}(\boldsymbol{\theta},t))\hspace{-1bp}-\hspace{-1bp}1\Big)\hspace{-1bp}\hspace{-0.5bp}+\hspace{-0.5bp}\hspace{-1bp}\dfrac{1}{\alpha}\mathbb{E}\{{\bf 1}_{{\cal A}(\boldsymbol{\theta},t)}(\boldsymbol{x},y)[\ell(f(\boldsymbol{x},\boldsymbol{\theta}),y)\hspace{-1bp}-\hspace{-1bp}\ell(f(\boldsymbol{x},\boldsymbol{\theta}^{*}),y)]\}\nonumber \\
 & \equiv\hspace{-1bp}\hspace{-1bp}(t^{*}\hspace{-1bp}-\hspace{-1bp}t)\Big(\dfrac{1}{\alpha}{\cal P}_{{\cal D}}({\cal A}(\boldsymbol{\theta},t))\hspace{-1bp}-\hspace{-1bp}1\Big)\hspace{-1bp}\hspace{-0.5bp}+\hspace{-0.5bp}\hspace{-1bp}\dfrac{1}{\alpha}\mathbb{E}\big\{\ell(f(\boldsymbol{x},\boldsymbol{\theta}),y)\hspace{-1bp}-\hspace{-1bp}\ell(f(\boldsymbol{x},\boldsymbol{\theta}^{*}),y)|{\cal A}(\boldsymbol{\theta},t)\big\}{\cal P}_{{\cal D}}({\cal A}(\boldsymbol{\theta},t))\nonumber \\
 & \equiv\hspace{-1bp}\hspace{-1bp}(t^{*}\hspace{-1bp}-\hspace{-1bp}t)\Big(\dfrac{1}{\alpha}{\cal P}_{{\cal D}}({\cal A}(\boldsymbol{\theta},t))\hspace{-1bp}-\hspace{-1bp}1\Big)\hspace{-1bp}\hspace{-0.5bp}+\hspace{-0.5bp}\hspace{-1bp}\dfrac{1}{\alpha}(\mathbb{E}\big\{\ell(f(\boldsymbol{x},\boldsymbol{\theta}),y)|{\cal A}(\boldsymbol{\theta},t)\big\}\hspace{-1bp}-\hspace{-1bp}\mathbb{E}\big\{\ell(f(\boldsymbol{x},\boldsymbol{\theta}^{*}),y)|{\cal A}(\boldsymbol{\theta},t)\big\}){\cal P}_{{\cal D}}({\cal A}(\boldsymbol{\theta},t))\nonumber \\
 & \le\hspace{-1bp}\hspace{-1bp}(t^{*}\hspace{-1bp}-\hspace{-1bp}t)\Big(\dfrac{1}{\alpha}{\cal P}_{{\cal D}}({\cal A}(\boldsymbol{\theta},t))\hspace{-1bp}-\hspace{-1bp}1\Big)\hspace{-1bp}\hspace{-0.5bp}+\hspace{-0.5bp}\hspace{-1bp}\dfrac{1}{\alpha}(\mathbb{E}\big\{\ell(f(\boldsymbol{x},\boldsymbol{\theta}),y)|{\cal A}(\boldsymbol{\theta},t)\big\}\hspace{-1bp}-\hspace{-1bp}\ell^{\star}(\boldsymbol{\theta},t)\big\}){\cal P}_{{\cal D}}({\cal A}(\boldsymbol{\theta},t))\nonumber \\
 & \equiv\hspace{-1bp}\hspace{-1bp}(t^{*}\hspace{-1bp}-\hspace{-1bp}t)\Big(\dfrac{1}{\alpha}{\cal P}_{{\cal D}}({\cal A}(\boldsymbol{\theta},t))\hspace{-1bp}-\hspace{-1bp}1\Big)\hspace{-1bp}\hspace{-0.5bp}+\hspace{-0.5bp}\hspace{-1bp}\dfrac{1}{\alpha}(\mathbb{E}\big\{\ell(f(\boldsymbol{x},\boldsymbol{\theta}),y)\hspace{-1bp}-\hspace{-1bp}\ell^{\star}(\boldsymbol{\theta},t)|{\cal A}(\boldsymbol{\theta},t)\big\}){\cal P}_{{\cal D}}({\cal A}(\boldsymbol{\theta},t))
\end{flalign}
Therefore, from the set-restricted P\L{} inequality we get
\begin{flalign}
G_{\alpha}(\boldsymbol{\theta},t)-G_{\alpha}(\boldsymbol{\theta}^{*},t^{*}) & \le(t^{*}-t)\Big(\dfrac{1}{\alpha}{\cal P}_{{\cal D}}({\cal A}(\boldsymbol{\theta},t))-1\Big)\nonumber \\
 & \quad\quad+\dfrac{1}{2\mu\alpha}\Vert\mathbb{E}\{\nabla_{\boldsymbol{\theta}}\ell(f(\boldsymbol{x},\boldsymbol{\theta}),y)|{\cal A}(t,\boldsymbol{\theta})\}\Vert_{2}^{2}{\cal P}_{{\cal D}}({\cal A}(\boldsymbol{\theta},t)).
\end{flalign}
Next, assuming that
\begin{alignat}{2}
{\cal P}_{{\cal D}}({\cal A}(\boldsymbol{\theta},t)) & >\alpha+\alpha2\mu(t^{*}-t)_{+} & \implies & {\cal P}_{{\cal D}}({\cal A}(\boldsymbol{\theta},t))>\alpha\iff\dfrac{1}{\alpha}{\cal P}_{{\cal D}}({\cal A}(\boldsymbol{\theta},t))-1>0\\
 & \ge\alpha+\alpha2\mu(t^{*}-t) & \iff & (t^{*}-t)\le\dfrac{1}{2\mu}\Big(\dfrac{1}{\alpha}{\cal P}_{{\cal D}}({\cal A}(\boldsymbol{\theta},t))-1\Big)>0
\end{alignat}
for all $(\boldsymbol{\theta},t)$ in a subset $\Delta'\subseteq\Delta$,
it follows that
\begin{equation}
(t^{*}-t)\Big(\dfrac{1}{\alpha}{\cal P}_{{\cal D}}({\cal A}(\boldsymbol{\theta},t))-1\Big)\le\dfrac{1}{2\mu}\Big(1-\dfrac{1}{\alpha}{\cal P}_{{\cal D}}({\cal A}(\boldsymbol{\theta},t))\Big)^{2},
\end{equation}
for all $(\boldsymbol{\theta},t)$ on that subset. Therefore, we
may further write
\begin{flalign}
G_{\alpha}(\boldsymbol{\theta},t)-G_{\alpha}(\boldsymbol{\theta}^{*},t^{*}) & \le\dfrac{1}{2\mu}\Big(\dfrac{1}{\alpha}{\cal P}_{{\cal D}}({\cal A}(\boldsymbol{\theta},t))-1\Big)^{2}\nonumber \\
 & \quad\quad+\dfrac{1}{2\mu\alpha^{2}}\Vert\mathbb{E}\{\nabla_{\boldsymbol{\theta}}\ell(f(\boldsymbol{x},\boldsymbol{\theta}),y)|{\cal A}(t,\boldsymbol{\theta})\}\Vert_{2}^{2}({\cal P}_{{\cal D}}({\cal A}(\boldsymbol{\theta},t)))^{2}.
\end{flalign}
Now, observe that
\begin{equation}
\nabla G_{\alpha}(\boldsymbol{\theta},t)=\begin{bmatrix}\dfrac{1}{\alpha}\mathbb{E}\{{\bf 1}_{{\cal A}(\boldsymbol{\theta},t)}(\boldsymbol{x},y)\nabla_{\boldsymbol{\theta}}\ell(f(\boldsymbol{x},\boldsymbol{\theta}),y)\}\\
1-\dfrac{1}{\alpha}{\cal P}_{{\cal D}}({\cal A}(\boldsymbol{\theta},t))
\end{bmatrix},
\end{equation}
from where we immediately deduce that, for every $(\boldsymbol{\theta},t)\in\Delta'$,
\begin{equation}
\mu(G_{\alpha}(\boldsymbol{\theta},t)-G_{\alpha}(\boldsymbol{\theta}^{*},t^{*}))\le\dfrac{1}{2}\Vert\nabla G(\boldsymbol{\theta},t)\Vert_{2}^{2},
\end{equation}
and the proof is complete.
\end{proof}
In what follows, let $\{\mathscr{D}_{n}\}_{n\in\mathbb{N}}$ be the
history (i.e., filtration) generated by $\mathrm{CV@R}$-$\mathrm{SGD}$
and the observables (i.e., available datastream). Our main result
follows, showing linear convergence of $\mathrm{CV@R}$-SGD under
the set-restricted P\L{} inequality.
\begin{thm}
\textbf{\textup{(Linear Convergence of $\boldsymbol{\mathrm{CV@R}}$-SGD)}}\label{thm:CV@R_Linear}
Fix $\alpha\in(0,1)$, let Assumption \ref{Assumption1} be in effect
and suppose that, for a subset $\Delta\equiv\Delta{}_{m}\times[-\infty,\overline{t}]$,
with $\Delta_{m}\subseteq\mathbb{R}^{m}$, conditions (1)-(2) of Lemma
\ref{lem:CV@R_Approx} are in effect, as well. Further, for fixed
$T\in\mathbb{N}$, let $\gamma$ be small enough such that
\begin{equation}
\mathbb{E}_{n}\{t^{n+1}|\mathscr{D}_{n}\}>t^{n}+2\gamma\mu(t^{*}-t^{n})_{+},\quad\forall n\in\mathbb{N}_{T},\label{eq:recursive}
\end{equation}
and let $\Delta_{T}\triangleq\{\boldsymbol{\theta}^{n},t^{n}\}_{n\in\mathbb{N}_{T}}$
be the set of points generated by $\mathrm{CV@R}$-$\mathrm{SGD}$.
As long as $\Delta_{T}\subseteq\Delta$ (in the notation of Lemma
\ref{lem:CV@R_Approx}), $G_{\alpha}$ is $L$-smooth on $\Delta_{T}$,
and $2\mu\min\{\beta,\gamma\}<1$, it is true that
\begin{equation}
\hspace{-1bp}\hspace{-1bp}\hspace{-1bp}\hspace{-1bp}\hspace{-1bp}\hspace{-1bp}\hspace{-1bp}\hspace{-1bp}\hspace{-1bp}\hspace{-1bp}\hspace{-1bp}\hspace{-1bp}\hspace{-1bp}\hspace{-1bp}\hspace{-1bp}\hspace{-1bp}\hspace{-1bp}\hspace{-1bp}\boxed{\begin{array}{l}
\vspace{-35.5pt}\\
\hspace{-1bp}\hspace{-1bp}\hspace{-1bp}\hspace{-1bp}\mathbb{E}\big\{ G_{\alpha}(\boldsymbol{\theta}^{T+1},t^{T+1})-G_{\alpha}(\boldsymbol{\theta}^{*},t^{*})\hspace{-1bp}\big\}\\
\quad\le(1-2\mu\min\{\beta,\gamma\})^{T}(G_{\alpha}(\boldsymbol{\theta}^{0},t^{0})-G_{\alpha}(\boldsymbol{\theta}^{*},t^{*}))+\dfrac{(\max\{\beta,\gamma\})^{2}}{\min\{\beta,\gamma\}}\dfrac{L(1+C_{T}^{2})}{4\alpha^{2}\mu},\hspace{-1bp}\hspace{-1bp}\hspace{-1bp}\hspace{-1bp}
\end{array}}
\end{equation}
where $\sup_{n\in\mathbb{N}_{T}}\mathbb{E}\{\Vert\nabla_{\boldsymbol{\theta}}\ell(f(\boldsymbol{x}^{n+1},\boldsymbol{\theta}^{n}),y^{n+1})\Vert_{2}^{2}\}\le C_{T}^{2}$,
and where $(\boldsymbol{\theta}^{*},t^{*})\in{\textstyle \arg\min}_{\Delta}G_{\alpha}(\boldsymbol{\theta},t)$.
\end{thm}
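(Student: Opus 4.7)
The plan is to couple a standard $L$-smoothness descent inequality for SGD with the Polyak-\L{}ojasiewicz bound for $G_{\alpha}$ already supplied by Lemma \ref{lem:CV@R_Approx}, and then iterate the resulting one-step geometric recursion. First I would write the $\mathrm{CV@R}$-SGD updates compactly as $(\boldsymbol{\theta}^{n+1},t^{n+1})=(\boldsymbol{\theta}^{n},t^{n})-(\beta\boldsymbol{g}_{\theta}^{n},\gamma g_{t}^{n})$, where $\boldsymbol{g}_{\theta}^{n}$ and $g_{t}^{n}$ are the stochastic estimators implied by (\ref{eq:SGD_1})--(\ref{eq:SGD_2}); by the differentiation-under-expectation identity (\ref{eq:GRAD}), both are conditionally unbiased for the corresponding components of $\nabla G_{\alpha}(\boldsymbol{\theta}^{n},t^{n})$ given $\mathscr{D}_{n}$. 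Applying the $L$-smooth quadratic upper bound on $\Delta_{T}$, taking conditional expectation, and using unbiasedness then produces
\[
\mathbb{E}\{G_{\alpha}^{n+1}|\mathscr{D}_{n}\}\le G_{\alpha}^{n}-\beta\Vert\nabla_{\boldsymbol{\theta}}G_{\alpha}^{n}\Vert_{2}^{2}-\gamma|\partial_{t}G_{\alpha}^{n}|^{2}+\tfrac{L}{2}\bigl(\beta^{2}\mathbb{E}\{\Vert\boldsymbol{g}_{\theta}^{n}\Vert_{2}^{2}|\mathscr{D}_{n}\}+\gamma^{2}\mathbb{E}\{|g_{t}^{n}|^{2}|\mathscr{D}_{n}\}\bigr),
\]
with $G_{\alpha}^{n}\triangleq G_{\alpha}(\boldsymbol{\theta}^{n},t^{n})$. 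Direct inspection of (\ref{eq:SGD_1})-(\ref{eq:SGD_2}) gives $|g_{t}^{n}|\le 1/\alpha$ deterministically and $\mathbb{E}\{\Vert\boldsymbol{g}_{\theta}^{n}\Vert_{2}^{2}|\mathscr{D}_{n}\}\le C_{T}^{2}/\alpha^{2}$ via the uniform second-moment hypothesis on $\nabla_{\boldsymbol{\theta}}\ell$, while the two curvature terms collapse into a single one through $\beta\Vert\nabla_{\boldsymbol{\theta}}G_{\alpha}^{n}\Vert_{2}^{2}+\gamma|\partial_{t}G_{\alpha}^{n}|^{2}\ge\min\{\beta,\gamma\}\Vert\nabla G_{\alpha}^{n}\Vert_{2}^{2}$.

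The key step is then to invoke Lemma \ref{lem:CV@R_Approx} pathwise. Since $\mathbb{E}\{t^{n+1}|\mathscr{D}_{n}\}=t^{n}-\gamma(1-{\cal P}_{{\cal D}}({\cal A}(\boldsymbol{\theta}^{n},t^{n}))/\alpha)$, hypothesis (\ref{eq:recursive}) rearranges algebraically to ${\cal P}_{{\cal D}}({\cal A}(\boldsymbol{\theta}^{n},t^{n}))>\alpha+2\alpha\mu(t^{*}-t^{n})_{+}$, which is exactly the pointwise condition defining the restricted subset $\Delta'$ in Lemma \ref{lem:CV@R_Approx}. Combined with $\Delta_{T}\subseteq\Delta$, this places every iterate in $\Delta'$, so Lemma \ref{lem:CV@R_Approx} delivers $\Vert\nabla G_{\alpha}^{n}\Vert_{2}^{2}\ge 2\mu(G_{\alpha}^{n}-G_{\alpha}^{*})$ along the trajectory. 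Feeding this and the variance bounds back into the preceding display yields the one-step contraction
\[
\mathbb{E}\{G_{\alpha}^{n+1}-G_{\alpha}^{*}|\mathscr{D}_{n}\}\le(1-2\mu\min\{\beta,\gamma\})(G_{\alpha}^{n}-G_{\alpha}^{*})+\tfrac{L(\max\{\beta,\gamma\})^{2}(1+C_{T}^{2})}{2\alpha^{2}},
\]
with contraction factor in $(0,1)$ courtesy of $2\mu\min\{\beta,\gamma\}<1$. Taking total expectation, unrolling the recursion from $n=0$ to $n=T$, and summing the geometric tail by $1/(2\mu\min\{\beta,\gamma\})$ converts the constant term into the stated noise floor $\tfrac{(\max\{\beta,\gamma\})^{2}}{\min\{\beta,\gamma\}}\tfrac{L(1+C_{T}^{2})}{4\mu\alpha^{2}}$, giving the boxed inequality.

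The main obstacle I anticipate is the pathwise justification that every iterate lies in Lemma \ref{lem:CV@R_Approx}'s restricted subset $\Delta'$: the lemma's hypothesis is stated pointwise on $(\boldsymbol{\theta},t)$, whereas the theorem's assumption (\ref{eq:recursive}) is stated as a conditional-expectation relation on $t^{n+1}$. Reconciling the two is the delicate bookkeeping step, but fortunately it reduces to a clean algebraic rearrangement through the explicit form of the conditional $t$-update noted above. Once this bridge is in place, the remainder is the textbook SGD-under-PL telescoping argument, so no further surprises are expected.
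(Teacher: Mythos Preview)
Your proposal is correct and follows essentially the same route as the paper's proof: both rearrange hypothesis (\ref{eq:recursive}) via the conditional $t$-update to obtain ${\cal P}_{{\cal D}}({\cal A}(\boldsymbol{\theta}^{n},t^{n}))>\alpha+2\alpha\mu(t^{*}-t^{n})_{+}$ so that Lemma \ref{lem:CV@R_Approx} applies on $\Delta_{T}$, combine the resulting P\L{} bound with the $L$-smoothness descent inequality and the second-moment estimates $\Vert\nabla g^{\alpha}\Vert_{2}^{2}\le(1+\Vert\nabla_{\boldsymbol{\theta}}\ell\Vert_{2}^{2})/\alpha^{2}$, and then unroll the one-step contraction. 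The only cosmetic slip is that $C_{T}^{2}$ bounds the \emph{unconditional} second moment, so your inequality $\mathbb{E}\{\Vert\boldsymbol{g}_{\theta}^{n}\Vert_{2}^{2}|\mathscr{D}_{n}\}\le C_{T}^{2}/\alpha^{2}$ should be deferred until after taking total expectation, exactly as the paper does.
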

\begin{proof}[Proof of Theorem \ref{thm:CV@R_Linear}]
By the assumptions of the theorem, the elements of $\Delta_{T}$
must satisfy the recursion
\begin{equation}
\mathbb{E}_{n}\{t^{n+1}|\mathscr{D}_{n}\}=t^{n}-\gamma\Big[1-\dfrac{1}{\alpha}{\cal P}_{{\cal D}}({\cal A}(\boldsymbol{\theta}^{n},t^{n}))\Big],\quad n\in\mathbb{N}_{T}^{+}.
\end{equation}
By (\ref{eq:recursive}), we get
\begin{equation}
\mathbb{E}_{n}\{t^{n+1}|\mathscr{D}_{n}\}-t^{n}>2\gamma\mu(t^{*}-t^{n})_{+}\iff\alpha\dfrac{\mathbb{E}_{n}\{t^{n+1}|\mathscr{D}_{n}\}-t^{n}}{\gamma}+\alpha>\alpha+2\alpha\mu(t^{*}-t^{n})_{+},
\end{equation}
or equivalently,
\begin{equation}
{\cal P}_{{\cal D}}({\cal A}(\boldsymbol{\theta}^{n},t^{n}))>\alpha+2\alpha\mu(t^{*}-t^{n})_{+},\quad\forall n\in\mathbb{N}_{T}^{+}.
\end{equation}
Therefore, we may now invoke Lemma \ref{lem:CV@R_Approx}. Indeed,
assuming that $\Delta_{T}\subseteq\Delta$ and that $G_{\alpha}$
is $L$-smooth on $\Delta_{T}$, we may use the $\mathrm{CV@R}$-$\mathrm{SGD}$
updates to write 
\begin{flalign}
G_{\alpha}(\boldsymbol{\theta}^{n+1},t^{n+1}) & \le G_{\alpha}(\boldsymbol{\theta}^{n},t^{n})-\Big\langle\nabla G_{\alpha}(\boldsymbol{\theta}^{n},t^{n}),[\beta{\bf 1}_{m}\,\gamma]^{\boldsymbol{T}}\circ\nabla g_{(\boldsymbol{x}^{n+1},y^{n+1})}^{\alpha}(\boldsymbol{\theta}^{n},t^{n})\hspace{-1bp}\Big\rangle\nonumber \\
 & \quad\quad+\dfrac{L}{2}\Vert[\beta{\bf 1}_{m}\,\gamma]^{\boldsymbol{T}}\circ\nabla g_{(\boldsymbol{x}^{n+1},y^{n+1})}^{\alpha}(\boldsymbol{\theta}^{n},t^{n})\Vert_{2}^{2}
\end{flalign}
for each $n\in\mathbb{N}_{T}$, where ``$\circ$'' denotes the Hadamard
product. Taking expectations relative to $\mathscr{D}_{n}$, we obtain
\begin{flalign}
\mathbb{E}\{G_{\alpha}(\boldsymbol{\theta}^{n+1},t^{n+1})|\mathscr{D}_{n}\} & \le G_{\alpha}(\boldsymbol{\theta}^{n},t^{n})-\langle\nabla G_{\alpha}(\boldsymbol{\theta}^{n},t^{n}),[\beta{\bf 1}_{m}\,\gamma]^{\boldsymbol{T}}\circ\nabla G_{\alpha}(\boldsymbol{\theta}^{n},t^{n})\rangle\nonumber \\
 & \quad\quad+\dfrac{L}{2}\mathbb{E}\Big\{\Vert[\beta{\bf 1}_{m}\,\gamma]^{\boldsymbol{T}}\circ\nabla g_{(\boldsymbol{x}^{n+1},y^{n+1})}^{\alpha}(\boldsymbol{\theta}^{n},t^{n})\Vert_{2}^{2}|\mathscr{D}_{n}\Big\}\nonumber \\
 & \le G_{\alpha}(\boldsymbol{\theta}^{n},t^{n})-\min\{\beta,\gamma\}\Vert\nabla G_{\alpha}(\boldsymbol{\theta}^{n},t^{n})\Vert_{2}^{2}\nonumber \\
 & \quad\quad+\dfrac{L}{2}(\max\{\beta,\gamma\})^{2}\mathbb{E}\{\Vert\nabla g_{(\boldsymbol{x}^{n+1},y^{n+1})}^{\alpha}(\boldsymbol{\theta}^{n},t^{n})\Vert_{2}^{2}|\mathscr{D}_{n}\},
\end{flalign}
By applying Lemma \ref{lem:CV@R_Approx} for $G_{\alpha}$, and using
the fact that
\begin{flalign}
\big\Vert\nabla g_{(\boldsymbol{x}^{n+1},y^{n+1})}^{\alpha}(\boldsymbol{\theta}^{n},t^{n})\hspace{-0.5bp}\hspace{-1bp}\big\Vert_{2}^{2} & \equiv\Big(1-\dfrac{1}{\alpha}{\bf 1}_{{\cal A}(\boldsymbol{\theta}^{n},t^{n})}(\boldsymbol{x}^{n+1},y^{n+1})\Big)^{2}\nonumber \\
 & \quad\quad+\dfrac{1}{\alpha^{2}}{\bf 1}_{{\cal A}(\boldsymbol{\theta}^{n},t^{n})}\Vert\nabla_{\boldsymbol{\theta}}\ell(f(\boldsymbol{x}^{n+1},\boldsymbol{\theta}^{n}),y^{n+1})\Vert_{2}^{2}\nonumber \\
 & \le\max\Big\{1,\Big(\dfrac{1-\alpha}{\alpha}\Big)^{2}\Big\}\nonumber \\
 & \quad\quad+\dfrac{1}{\alpha^{2}}\Vert\nabla_{\boldsymbol{\theta}}\ell(f(\boldsymbol{x}^{n+1},\boldsymbol{\theta}^{n}),y^{n+1})\Vert_{2}^{2}\nonumber \\
 & \le\dfrac{1}{\alpha^{2}}+\dfrac{1}{\alpha^{2}}\Vert\nabla_{\boldsymbol{\theta}}\ell(f(\boldsymbol{x}^{n+1},\boldsymbol{\theta}^{n}),y^{n+1})\Vert_{2}^{2},
\end{flalign}
we further get
\begin{flalign}
\mathbb{E}\{G_{\alpha}(\boldsymbol{\theta}^{n+1},t^{n+1})|\mathscr{D}_{n}\} & \le G_{\alpha}(\boldsymbol{\theta}^{n},t^{n})-\min\{\beta,\gamma\}2\mu(G_{\alpha}(\boldsymbol{\theta}^{n},t^{n})-G_{\alpha}(\boldsymbol{\theta}^{*},t^{*}))\nonumber \\
 & \quad\quad+\dfrac{L}{2}(\max\{\beta,\gamma\})^{2}\dfrac{1+\mathbb{E}\{\Vert\nabla_{\boldsymbol{\theta}}\ell(f(\boldsymbol{x}^{n+1},\boldsymbol{\theta}^{n}),y^{n+1})\Vert_{2}^{2}|\mathscr{D}_{n}\}}{\alpha^{2}},
\end{flalign}
Rearranging and taking expectation one more time, it follows that
\begin{flalign}
\mathbb{E}\{G_{\alpha}(\boldsymbol{\theta}^{n+1},t^{n+1})-G_{\alpha}(\boldsymbol{\theta}^{*},t^{*})\} & \le(1-2\mu\min\{\beta,\gamma\})(G_{\alpha}(\boldsymbol{\theta}^{n},t^{n})-G_{\alpha}(\boldsymbol{\theta}^{*},t^{*}))\nonumber \\
 & \quad\quad+\dfrac{L}{2}(\max\{\beta,\gamma\})^{2}\dfrac{1+C_{T}^{2}}{\alpha^{2}},
\end{flalign}
where we have used that $\sup_{n\in\mathbb{N}_{T}}\mathbb{E}\{\Vert\nabla_{\boldsymbol{\theta}}\ell(f(\boldsymbol{x}^{n+1},\boldsymbol{\theta}^{n}),y^{n+1})\Vert_{2}^{2}\}\le C_{T}^{2}$.
Using that $\min\{\beta,\gamma\}<1$ and applying this inequality
recursively, we may easily see that
\begin{align}
\mathbb{E}\big\{ G_{\alpha}(\boldsymbol{\theta}^{T+1},t^{T+1})-G_{\alpha}(\boldsymbol{\theta}^{*},t^{*})\hspace{-0.5bp}\hspace{-1bp}\big\} & \le(1-2\mu\min\{\beta,\gamma\})^{T}(G_{\alpha}(\boldsymbol{\theta}^{0},t^{0})-G_{\alpha}(\boldsymbol{\theta}^{*},t^{*}))\nonumber \\
 & \quad\quad+\dfrac{(\max\{\beta,\gamma\})^{2}}{\min\{\beta,\gamma\}}\dfrac{L(1+C_{T}^{2})}{4\alpha^{2}\mu}.
\end{align}
The proof is complete.
\end{proof}
A couple of remarks regarding the assumptions and conclusions of Theorem
\ref{thm:CV@R_Linear} are essential at this point. First and foremost,
we should discuss the existence of an appropriate $\gamma$ satisfying
condition (\ref{eq:recursive}), which is of central importance in
the proof the theorem. Indeed, assume that there are choices of $\varepsilon$
and $\gamma$ such that, for every $n\in\mathbb{N}_{T}$,
\begin{equation}
\alpha\Big(1+\dfrac{\varepsilon}{\gamma}\Big)\le{\cal P}_{{\cal D}}({\cal A}(\boldsymbol{\theta}^{n},t^{n})),\label{eq:COND_1}
\end{equation}
which is a valid statement if and only if
\begin{equation}
\alpha\Big(1+\dfrac{\varepsilon}{\gamma}\Big)\le1\iff\dfrac{\alpha\varepsilon}{(1-\alpha)}\le\gamma,
\end{equation}
and equivalent to
\begin{equation}
1-\dfrac{1}{\alpha}{\cal P}_{{\cal D}}({\cal A}(\boldsymbol{\theta}^{n},t^{n}))\le-\dfrac{\varepsilon}{\gamma}.
\end{equation}
As a result (see proof of Theorem \ref{thm:CV@R_Linear}), by construction
of $\mathrm{CV@R}$-$\mathrm{SGD}$ we obtain
\begin{equation}
\mathbb{E}_{n}\{t^{n+1}|\mathscr{D}_{n}\}-t^{n}\equiv-\gamma\Big[1-\dfrac{1}{\alpha}{\cal P}_{{\cal D}}({\cal A}(\boldsymbol{\theta}^{n},t^{n}))\Big]\ge\varepsilon.
\end{equation}
Consequently, to satisfy (\ref{eq:recursive}), we may \textit{additionally}
demand that
\begin{equation}
\varepsilon>2\gamma\mu(t^{*}-t^{n})_{+},
\end{equation}
and noting that $t^{n}$ can be conservatively taken no less than
$l-(2\mu)^{-1}$, where $l$ denotes the lowest value of the loss
under consideration (this may be shown again by construction of $\mathrm{CV@R}$-$\mathrm{SGD}$),
we end up with the uniform upper limit
\begin{equation}
\gamma<\dfrac{\varepsilon}{2\mu(t^{*}-l)+1}.
\end{equation}
Overall, \textit{together with} (\ref{eq:COND_1}) we have the conditions
\begin{equation}
\dfrac{\alpha\varepsilon}{1-\alpha}\le\gamma<\dfrac{\varepsilon}{2\mu(t^{*}-l)+1},\label{eq:COND_2}
\end{equation}
from where it follows that it must also be the case that
\begin{equation}
\dfrac{\alpha}{1-\alpha}<\dfrac{1}{2\mu(t^{*}-l)+1}\iff t_{\alpha}^{*}-l<\dfrac{1-2\alpha}{2\alpha\mu}
\end{equation}
in order for such conditions on $\gamma$ to be meaningful. Lastly,
note that conditions (\ref{eq:COND_1}) and (\ref{eq:COND_2}) can
indeed be satisfied for particular choices of $\varepsilon$ and $\gamma$
when $\alpha$ is small enough.

Although these dependencies could seem fairly restrictive, they are
very reasonable, since in order for $\mathrm{CV@R}$-SGD to converge
fast, the condition $\ell(f(\boldsymbol{x}^{n+1},\boldsymbol{\theta}^{n}),y^{n+1})-t^{n}\ge0$
needs to be satisfied sufficiently often. But all this is reasonable
from a practical perspective as well: If $\alpha$ is closer to $1$
(risk-neutral setting), risky events are effectively smoothened, whereas,
if $\alpha$ approaches zero, only rare events matter, and an essentially
robust solution is sought, which does not really exhibit the dynamic
character of a risk-aware solution. Therefore, depending on the problem,
$\alpha$ should be chosen modestly, providing \textit{both} non-trivial
results \textit{and} fast linear convergence; from a conceptual point
of view, there is a certain logical \textit{balance to be respected
between moderatism and conservatism}.

Second, the set-restricted P\L{} inequality involved in Theorem \ref{thm:CV@R_Linear}
may still look mysterious, but is indeed useful. In fact, by Proposition
\ref{prop:SC_sat_scPL}, a byproduct of Theorem \ref{thm:CV@R_Linear}
is \textit{that $\mathrm{CV@R}$}-SGD\textit{ converges linearly to
fixed, user-tunable accuracy whenever $\ell(f(\boldsymbol{x},\cdot),y)$
is strongly convex and smooth for every $(\boldsymbol{x},y)$}, even
though $G_{\alpha}$ might not be strongly convex. This is especially
important, because it shows that classical problems, such as linear
least squares regression, can \textit{provably} be solved most efficiently
using SGD under risk-aware performance criteria, i.e., the $\mathrm{CV@R}$,
just as their risk-neutral counterparts (for instance, via the celebrated
Least-Mean-Squares (LMS) algorithm for linear least squares problems).

\section{Enforcing Smoothness}

There are two potential issues associated with the $\mathrm{CV@R}$
problem (\ref{eq:CVAR_min}) and the assumptions ensuring linear convergence
of\textit{ $\mathrm{CV@R}$}-SGD, as suggested in Theorem \ref{thm:CV@R_Linear}.
The \textit{first} is that there are useful cases where the demand
that ${\cal P}_{{\cal D}}(\ell(f(\boldsymbol{x},\bullet),y)=(\cdot))\equiv0$
on $\mathbb{R}^{m}\times\mathbb{R}$ (see Assumption \ref{Assumption1}.\ref{Ass12})
might not be satisfied; this happens, e.g., in classification problems
where the hypothesis class ${\cal F}$ contains \textit{hard} \textit{classifiers},
i.e., functions with binary or discrete range. The \textit{second}
issue is that the smoothness assumption on $G_{\alpha}$, essential
to obtain the rate promised by Theorem \ref{thm:CV@R_Linear}, might
not be easy to verify or even hold by merely assuming that the loss
$\ell(f(\boldsymbol{x},\cdot),y)$ is smooth; this is due to the presence
of the indicator ${\bf 1}_{{\cal A}(\bullet,\cdot)}(\boldsymbol{x},y)$
next to $\nabla_{\boldsymbol{\theta}}\ell(f(\boldsymbol{x},\bullet),y)$
in (\ref{eq:GRAD}). It turns out that these two issues are related,
and both may be mitigated by a rather simple strategy, which we now
discuss.

Consider an \textit{augmented example} $(\boldsymbol{x},y,w)$, where
$w\sim{\cal N}(0,\sigma^{2})$, $\sigma^{2}\hspace{-0.5bp}\hspace{-0.5bp}>\hspace{-0.5bp}\hspace{-0.5bp}0$,
is a \textit{fictitious target}, independent of $(\boldsymbol{x},y)$,
which we choose to use \textit{adversarially} during the training
process. In particular, we do that by defining the \textit{surrogate
loss} $\widetilde{\ell}:\mathbb{R}\times\mathbb{R}\times\mathbb{R}\rightarrow\mathbb{R}$
as
\begin{equation}
\widetilde{\ell}(f(\boldsymbol{x},\boldsymbol{\theta}),y,w)\triangleq\ell(f(\boldsymbol{x},\boldsymbol{\theta}),y)-w,
\end{equation}
Although such a surrogate loss is meaningless in the risk-neutral
setting (since $\mathbb{E}\{w\}\equiv0$), it \textit{provides regularization}
in risk-aware and, in particular, $\mathrm{CV@R}$ statistical learning.
In fact, it can be easily shown that, by choosing $\widetilde{\ell}$
as the loss, Assumption \ref{Assumption1}.\ref{Ass12} is always
satisfied, and the resulting objective function in problem (\ref{eq:CVAR_min})
is $L'$-smooth whenever $\ell(f(\boldsymbol{x},\cdot),y)$ is $G$-Lipschitz
and $L$-smooth, with
\begin{equation}
L'\equiv\dfrac{L\sigma{\textstyle \sqrt{2\pi}}+G^{2}}{\alpha\sigma{\textstyle \sqrt{2\pi}}}.
\end{equation}

To see those facts, observe that because $w$ is independent of $(\boldsymbol{x},y)$,
we may write
\begin{flalign}
{\cal P}_{\widetilde{{\cal D}}}(\widetilde{\ell}(f(\boldsymbol{x},\boldsymbol{\theta}),y,w)=t) & \equiv{\cal P}_{\widetilde{{\cal D}}}(\ell(f(\boldsymbol{x},\boldsymbol{\theta}),y)-w=t)\nonumber \\
 & \equiv\mathbb{E}_{{\cal P}_{{\cal D}}}\{{\cal P}_{\mathbb{R}}(\ell(f(\boldsymbol{x},\boldsymbol{\theta}),y)-t=w|\boldsymbol{x},y)\}\equiv0,
\end{flalign}
since $w$ is a continuous random variable. This shows that Assumption
\ref{Assumption1}.\ref{Ass12} is satisfied. Further, recall the
expression for the gradient $\nabla G_{\alpha}$ which, for the loss
$\widetilde{\ell}$ considered here, becomes
\begin{equation}
\hspace{-1bp}\hspace{-1bp}\hspace{-1bp}\hspace{-1bp}\hspace{-1bp}\hspace{-1bp}\hspace{-1bp}\hspace{-0.5bp}\nabla G_{\alpha}(\boldsymbol{\theta},t)\hspace{-1bp}\hspace{-0.5bp}=\hspace{-1bp}\hspace{-1bp}\hspace{-1bp}\hspace{-1bp}\begin{bmatrix}\dfrac{1}{\alpha}\mathbb{E}_{{\cal P}_{\widetilde{{\cal D}}}}\{{\bf 1}_{{\cal A}(\boldsymbol{\theta},t)}(\boldsymbol{x},y,w)\nabla_{\boldsymbol{\theta}}\widetilde{\ell}(f(\boldsymbol{x},\boldsymbol{\theta}),y,w)\}\\
-\dfrac{1}{\alpha}\mathbb{E}_{{\cal P}_{\widetilde{{\cal D}}}}\{{\bf 1}_{{\cal A}(\boldsymbol{\theta},t)}(\boldsymbol{x},y,w)\}+1
\end{bmatrix}\hspace{-1bp}\hspace{-1bp}\hspace{-0.5bp},\hspace{-1bp}\hspace{-1bp}\hspace{-1bp}\hspace{-1bp}\hspace{-1bp}
\end{equation}
where we additionally identify $\widetilde{{\cal D}}\triangleq\mathbb{R}^{d}\times\mathbb{R}\times\mathbb{R}$.
We first readily see that
\begin{flalign}
\mathbb{E}_{{\cal P}_{\widetilde{{\cal D}}}}\{{\bf 1}_{{\cal A}(\boldsymbol{\theta},t)}(\boldsymbol{x},y,w)\} & \equiv\mathbb{E}_{{\cal P}_{{\cal D}}}\{\mathbb{E}_{{\cal P}_{w}}\{{\bf 1}_{{\cal A}(\boldsymbol{\theta},t)}(\boldsymbol{x},y,w)|\boldsymbol{x},y\}\}\nonumber \\
 & \equiv\mathbb{E}_{{\cal P}_{{\cal D}}}\{{\cal P}_{w}(\ell(f(\boldsymbol{x},\boldsymbol{\theta}),y)-t>w|\boldsymbol{x},y)\}\nonumber \\
 & =\mathbb{E}_{{\cal P}_{{\cal D}}}\Big\{\Phi\Big(\dfrac{\ell(f(\boldsymbol{x},\boldsymbol{\theta}),y)-t}{\sigma}\Big)\hspace{-1bp}\hspace{-0.5bp}\Big\},
\end{flalign}
where $\Phi:\mathbb{R}\rightarrow[0,1]$ denotes the standard Gaussian
cumulative distribution function. In similar fashion, we also obtain
\begin{align}
\mathbb{E}_{{\cal P}_{\widetilde{{\cal D}}}}\{{\bf 1}_{{\cal A}(\boldsymbol{\theta},t)}(\boldsymbol{x},y,w)\nabla_{\boldsymbol{\theta}}\widetilde{\ell}(f(\boldsymbol{x},\boldsymbol{\theta}),y,w)\} & \equiv\mathbb{E}_{{\cal P}_{\widetilde{{\cal D}}}}\{{\bf 1}_{{\cal A}(\boldsymbol{\theta},t)}(\boldsymbol{x},y,w)\nabla_{\boldsymbol{\theta}}\ell(f(\boldsymbol{x},\boldsymbol{\theta}),y)\}\nonumber \\
 & \equiv\mathbb{E}_{{\cal P}_{{\cal D}}}\{\mathbb{E}_{{\cal P}_{w}}\{{\bf 1}_{{\cal A}(\boldsymbol{\theta},t)}(\boldsymbol{x},y,w)|\boldsymbol{x},y\}\nabla_{\boldsymbol{\theta}}\ell(f(\boldsymbol{x},\boldsymbol{\theta}),y)\}\nonumber \\
 & \equiv\mathbb{E}_{{\cal P}_{{\cal D}}}\{{\cal P}_{w}(\ell(f(\boldsymbol{x},\boldsymbol{\theta}),y)-t>w|\boldsymbol{x},y)\nabla_{\boldsymbol{\theta}}\ell(f(\boldsymbol{x},\boldsymbol{\theta}),y)\}\nonumber \\
 & \equiv\mathbb{E}_{{\cal P}_{{\cal D}}}\Big\{\Phi\Big(\dfrac{\ell(f(\boldsymbol{x},\boldsymbol{\theta}),y)-t}{\sigma}\Big)\nabla_{\boldsymbol{\theta}}\ell(f(\boldsymbol{x},\boldsymbol{\theta}),y)\Big\}.
\end{align}
Therefore, the gradient $\nabla G_{\alpha}$ may be equivalently represented
as
\begin{equation}
\hspace{-1bp}\hspace{-1bp}\hspace{-1bp}\hspace{-1bp}\hspace{-1bp}\hspace{-1bp}\hspace{-1bp}\hspace{-0.5bp}\nabla G_{\alpha}(\boldsymbol{\theta},t)\hspace{-1bp}\hspace{-0.5bp}=\hspace{-1bp}\hspace{-1bp}\hspace{-1bp}\hspace{-1bp}\begin{bmatrix}\dfrac{1}{\alpha}\mathbb{E}_{{\cal P}_{{\cal D}}}\Big\{\Phi\Big(\dfrac{\ell(f(\boldsymbol{x},\boldsymbol{\theta}),y)-t}{\sigma}\Big)\nabla_{\boldsymbol{\theta}}\ell(f(\boldsymbol{x},\boldsymbol{\theta}),y)\Big\}\\
-\dfrac{1}{\alpha}\mathbb{E}_{{\cal P}_{{\cal D}}}\Big\{\Phi\Big(\dfrac{\ell(f(\boldsymbol{x},\boldsymbol{\theta}),y)-t}{\sigma}\Big)\hspace{-1bp}\hspace{-0.5bp}\Big\}+1
\end{bmatrix}\hspace{-1bp}\hspace{-1bp}\hspace{-0.5bp}.\hspace{-1bp}\hspace{-1bp}\hspace{-1bp}\hspace{-1bp}\hspace{-1bp}
\end{equation}
Our claims above readily follow by exploiting this gradient representation.

Further, because it is true that \citep{Kalogerias2018b}
\begin{equation}
\mathbb{E}_{{\cal P}_{w}}\{(z-w)_{+}\}=\sigma\Big(\dfrac{z}{\sigma}\Phi\Big(\dfrac{z}{\sigma}\Big)+\phi\Big(\dfrac{z}{\sigma}\Big)\Big)\triangleq{\cal R}_{\sigma}(z),\quad\forall z\in\mathbb{R},
\end{equation}
where $\phi:\mathbb{R}\rightarrow\mathbb{R}_{+}$ denotes the standard
Gaussian density, and due to the fact that
\begin{equation}
(z)_{+}\le{\cal R}_{\sigma}(z)\le{\cal R}_{\sigma}(0)+(z)_{+}\equiv\dfrac{\sigma}{\sqrt{2\pi}}+(z)_{+},\quad\forall z\in\mathbb{R},
\end{equation}
we may readily derive \textit{uniform} estimates in $(\boldsymbol{\theta},t)$
\begin{equation}
\mathrm{CV@R}_{{\cal P}_{{\cal D}}}^{\alpha}[\ell(f(\boldsymbol{x},\boldsymbol{\theta}),y)]\le\mathrm{CV@R}_{{\cal P}_{\widetilde{{\cal D}}}}^{\alpha}[\widetilde{\ell}(f(\boldsymbol{x},\boldsymbol{\theta}),y,w)]\le\mathrm{CV@R}_{{\cal P}_{{\cal D}}}^{\alpha}[\ell(f(\boldsymbol{x},\boldsymbol{\theta}),y)]+\dfrac{\sigma}{\alpha\sqrt{2\pi}}.
\end{equation}
Then, similarly to Theorem \ref{thm:CV@R_Linear}, we obtain linear
convergence up to fixed accuracy 
\begin{equation}
\dfrac{(\max\{\beta,\gamma\})^{2}}{\min\{\beta,\gamma\}}\dfrac{(1+C_{T}^{2})}{4\alpha^{2}\mu}\dfrac{L\sigma{\textstyle \sqrt{2\pi}}+G^{2}}{\alpha\sigma{\textstyle \sqrt{2\pi}}}+\dfrac{\sigma}{\alpha{\textstyle \sqrt{2\pi}}}
\end{equation}
which by proper choice of $\sigma$ results in a quantity of the order
of
\begin{equation}
\Big(\sqrt{(\max\{\beta,\gamma\})^{2}/\min\{\beta,\gamma\}}\Big)\Big/\alpha^{2}.
\end{equation}
We observe that this result is slightly worse than that of Theorem
\ref{thm:CV@R_Linear}.

\section{\label{sec:Numerics}A Simple Numerical Example}

\begin{figure}
\centering\includegraphics[scale=0.6]{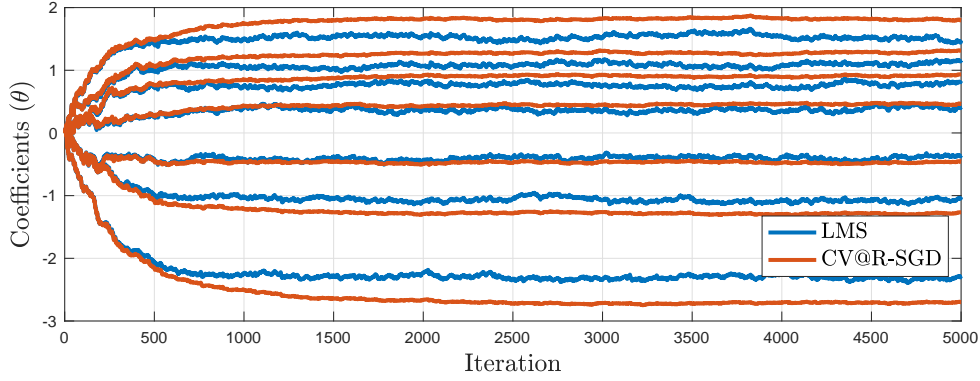}

\vspace{-8bp}
\caption{\label{fig:Comparison}Comparison between risk-neutral (LMS) and risk-aware
(\textit{$\mathrm{CV@R}$}-SGD) ridge regression: Evolution of iterates
$\{\boldsymbol{\theta}^{n}\}_{n}$.}
\end{figure}
In this section, we numerically demonstrate the behavior of \textit{$\mathrm{CV@R}$}-SGD,
confirming the validity of Theorem \ref{thm:CV@R_Linear}. To this
end, we consider the $\lambda$-strongly convex, risk-aware ridge
regression problem
\begin{equation}
\inf_{\boldsymbol{\theta}\in\mathbb{R}^{m}}\mathrm{CV@R}_{{\cal P}_{{\cal D}}}^{\alpha}\big[(y-\langle\boldsymbol{\theta},\boldsymbol{x}\rangle)^{2}+\lambda\Vert\boldsymbol{\theta}\Vert_{2}^{2}\big],\label{eq:Example}
\end{equation}
where $y\equiv\langle\boldsymbol{\theta}_{o},\boldsymbol{x}\rangle\in\mathbb{R}$
for a constant $\boldsymbol{\theta}_{o}\in\mathbb{R}^{7}$ and with
the elements of $\boldsymbol{x}\in\mathbb{R}^{7}$ being independent
uniform in $[0,2]$, $\lambda\equiv0.1$ and $\alpha\equiv0.2$. Therefore,
our goal is to find a $\boldsymbol{\theta}^{*}$\textit{ }which minimizes
the mean of the worst $80\%$ of all possible values of the random
error $(y-\langle\cdot,\boldsymbol{x}\rangle)^{2}+\lambda\Vert\cdot\Vert_{2}^{2}$.
Note that, for $\alpha\equiv1$, problem (\ref{eq:Example}) reduces
to ordinary ridge regression, and may be solved via the LMS algorithm.

Figs. \ref{fig:Comparison} and \ref{fig:Comparison-1} show the iterate
evolution as well as the behavior of the optimal prediction (test)
error for both \textit{$\mathrm{CV@R}$}-SGD (with stepsizes $\beta\equiv\alpha\hspace{-1bp}\hspace{-1bp}\times\hspace{-1bp}\hspace{-1bp}0.01$
and $\gamma\equiv0.001$) and the LMS scheme (with stepsize $\beta\equiv0.01$),
respectively. We observe that both algorithms converge at an essentially
identical \textit{noisy linear} \textit{rate}, in line with Theorem
\ref{thm:CV@R_Linear}. However, the solutions are radically different.
In fact, the risk-aware solution discovered by \textit{$\mathrm{CV@R}$}-SGD
\textit{dramatically} reduces the volatility of prediction error,
and provides prediction stability. Although this apparently comes
at the cost sacrificing mean performance, such sacrifice is fully
user-customizable by varying the \textit{$\mathrm{CV@R}$} level $\alpha$.
\begin{figure}
\centering\includegraphics[scale=0.65]{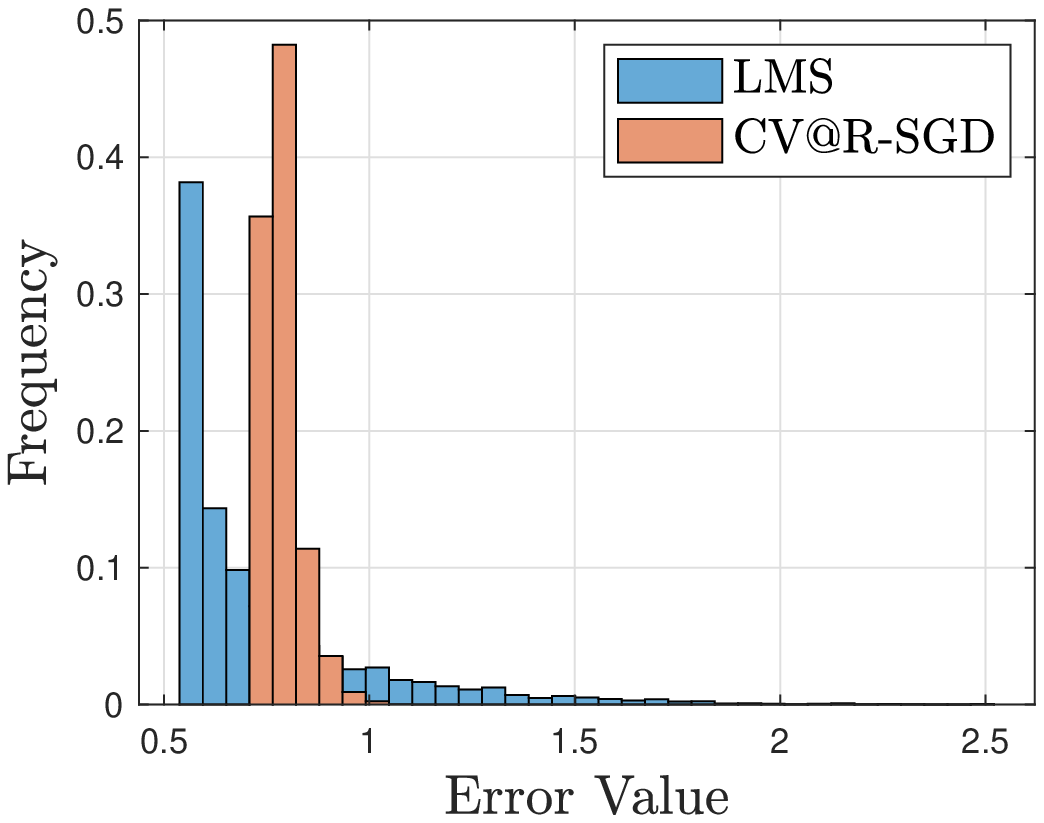}\hspace{30bp}\includegraphics[scale=0.65]{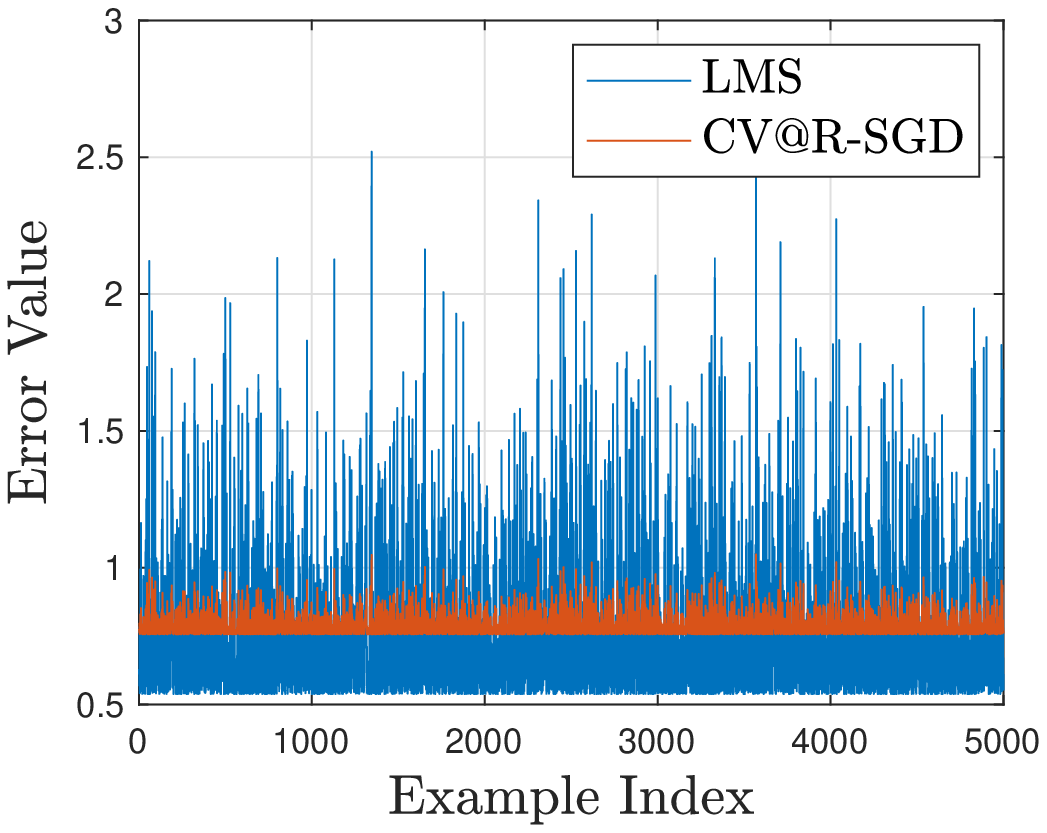}

\vspace{-8bp}
\caption{\label{fig:Comparison-1}Comparison between risk-neutral (LMS) and
risk-aware (\textit{$\mathrm{CV@R}$}-SGD) ridge regression: Histogram
(left) and actual values (right) of the test error.}
\end{figure}

\section{Conclusion}

In this work, we established noisy linear convergence of SGD for sequential\textbf{\textit{
$\mathrm{CV@R}$}} learning, for a large class of possibly nonconvex
loss functions satisfying a set-restricted P\L{} inequality, also
including all smooth and strongly convex losses as special cases.
This result disproves the belief that \textbf{\textit{$\mathrm{CV@R}$}}
learning is fundamentally difficult, and shows that classical learning
problems can be solved efficiently under \textbf{\textit{$\mathrm{CV@R}$
}}criteria, just as their risk-neutral versions. Our theory was also
illustrated via an indicative numerical example. Future work includes
the consideration of special learning settings such as linear least
squares, as well as other risk measures beyond \textbf{\textit{$\mathrm{CV@R}$.}}

\bibliographystyle{plainnat}
\phantomsection\addcontentsline{toc}{section}{\refname}\bibliography{library_fixed}

\end{document}